\documentclass[letterpaper]{article} 
\usepackage[preprint]{aaai2027}  
\usepackage[hyphens]{url}  
\usepackage{graphicx} 
\usepackage{natbib}  
\usepackage{caption} 
\usepackage{algorithm}
\usepackage{algorithmic}

\usepackage{newfloat}
\usepackage{listings}
\DeclareCaptionStyle{ruled}{labelfont=normalfont,labelsep=colon,strut=off} 
\floatstyle{ruled}
\newfloat{listing}{tb}{lst}{}
\floatname{listing}{Listing}

\usepackage{booktabs}

\usepackage{amsmath,amstext,amsfonts,bm,amssymb,mathtools}
\usepackage[table]{xcolor}
\usepackage{colortbl}
\usepackage{amsthm}
\usepackage{microtype}

\definecolor{mygray}{rgb}{0.90,0.9,1.00}

\newcommand{\blue}[1]{\textcolor{black}{#1}}

\theoremstyle{plain}

\newtheorem{proposition}{Proposition}[section]
\newtheorem{lemma}{Lemma}[section]
\newtheorem{corollary}{Corollary}[section]

\theoremstyle{definition}

\newtheorem{assumption}{Assumption}[section]

\theoremstyle{remark}

\title{PatchGen: Learning Soft Intra-Image Predictive Subsets for Visual Generalization}
\author{
    Zhaorui Tan\textsuperscript{\rm 1}\corresponding,
    Weimiao Yu\textsuperscript{\rm 1},
    Xi Yang\textsuperscript{\rm 2}\corresponding
}
\affiliations{
    \textsuperscript{\rm 1}Agency for Science, Technology and Research (A*STAR), Singapore\\
    \textsuperscript{\rm 2}Xi'an Jiaotong-Liverpool University, Suzhou, China
}

\begin{document}

\maketitle

\begin{abstract}
Visual classifiers are expected to generalize under data shifts, target shifts, and their combinations, yet most existing methods focus on domain invariance while failing to address intra-image predictive sufficiency. We investigate the structural hypothesis that each image contains a sample-adaptive oracle intra-image predictive subset sufficient for label prediction, while the remaining patches form non-essential complementary context that may correlate with the label. The theoretical analysis shows that restricting prediction to this oracle subset preserves the Bayes risk achievable by the full-patch representation while admitting a complexity bound that tightens with the oracle-subset size. Based on this view, we propose PatchGen, a text-free module that
learns a sample-dependent soft predictive-subset mask as a task-driven proxy
for the unobserved oracle subset mask. Specifically, histopathology visualizations suggest that PatchGen assigns higher scores to tumor-consistent regions than to some frequently co-occurring inflammatory context. Extensive experiments on natural and histopathological image benchmarks spanning all three shift settings show that PatchGen improves average performance over matched-backbone baselines in most evaluated configurations, enhances generalization to unknown classes, and remains competitive with vision-language methods without text supervision.
\end{abstract}

\section{Introduction}

Generalization remains a longstanding challenge in visual classification~\cite{gulrajani2020search,kukleva2021generalized,liu2021isometric,vaze2022generalized,cha2022miro,abbe2024generalization,tan2024rethinking,tan2024interpret}, particularly because real-world distribution shifts rarely stem from a single factor. Instead, they often involve both \emph{data shifts}, such as domain changes, and \emph{target shifts}, such as novel or evolving categories, which are collectively referred to as \emph{all shifts}~\cite{tan2024interpret}.

Existing visual generalization methods primarily address data shifts by
suppressing domain-dependent spurious correlations. Typical approaches learn
domain-invariant representations through statistical
alignment~\cite{yuan2023domain,li2018domain,hu2020domain,cha2022miro,tan2024rethinking}
or causal-inspired formulations~\cite{gong2025causal,wang2025exploring,liu2025caurdg}.
However, domain invariance alone may not distinguish prediction-sufficient
evidence from 
a complementary context that repeatedly co-occurs with predictive evidence in the source data but becomes unreliable under domain or label-space shifts.
For example, a model may exploit 
inflammation, which may usually accompany tumor-specific morphology in a source dataset, yet it is a non-specific response that can also arise from non-neoplastic injury.
Meanwhile, vision-language models such as CLIP~\cite{radford2021learning} offer strong
transferability, but their dependence on image-text alignment may limit their applicability to category-discovery settings in which novel classes lack reliable textual specifications.

\begin{figure}[t]
\centering
\includegraphics[width=0.95\linewidth]{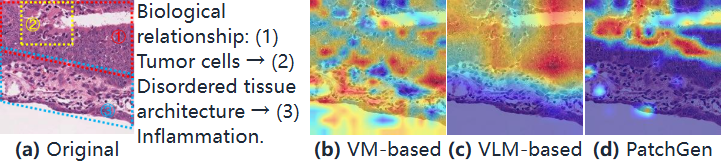}
\vspace{-0.2cm}
\caption{
\textbf{Qualitative histopathology visualization on an unseen-domain tumor image:}
PatchGen's learned soft predictive-subset mask compared with the
corresponding attention maps of previous methods.
}
\label{fig:path_exp1}
\end{figure}

These observations motivate an intra-image predictive view of visual generalization. 
As shown in Figure~\ref{fig:framework}~(a), we hypothesize that, in one image, there exists \blue{a sample-adaptive oracle intra-image predictive subset} ($C^\star(X)$) that contains evidence sufficient for label prediction, while the remaining complementary context subset ($R^\star(X)$) is non-essential.
Specifically, 
$R^\star(X)$ may be heterogeneous, comprising
label-correlated co-occurring regions, 
domain-varying shortcuts,
domain-stable but conditionally redundant regions, 
and regions with little or no marginal label information. Despite their different properties, these
patches provide no additional label information once the oracle
predictive-subset representation $\Phi_C(X)$ is given.
Under explicit stable predictive-sufficiency and bounded oracle-subset-size assumptions, we show that an oracle predictive subset containing at most $s$ of the $P$ patches ($s\le P$) 
\blue{preserves the Bayes risk achievable by the full-patch representation $\Phi_{\mathrm{all}}$} while admitting an
oracle-subset-size-dependent complexity upper bound that is no larger than its all-patch counterpart, and is strictly tighter whenever $s<P$.

Based on this perspective, we propose \textbf{PatchGen}, which learns a
sample-dependent \blue{soft predictive-subset mask} as a task-driven proxy for the
unobserved oracle \blue{subset mask} associated with $C^\star(X)$.
{\blue{This subset is not assumed to have a fixed size: it may be small when evidence is localized and may equal the full patch set when predictive evidence occupies the whole image, as in tumor-dominant histopathology tiles.}}
The mask is derived from cross-patch interactions and jointly optimized
through low-score mask suppression, selected-confidence regularization, and
class-conditional feature alignment.
Histopathological images provide a morphology-rich setting for inspecting
the learned patch-selection maps. In Figure~\ref{fig:path_exp1}, PatchGen
assigns relatively higher weights to regions morphologically consistent with
tumor tissue and lower weights to some surrounding inflammatory regions that distract other methods.

PatchGen is validated across three challenging tasks (details in Appendix~A): 1) \textbf{Multi-domain generalization (mDG)} for data shifts, 2) \textbf{Continual category discovery (CCD)} for target shifts, and 3) \textbf{Multi-domain generalization with GCD (mDG+GCD)} for all-shift scenario. 
Extensive experiments on natural and histopathological image benchmarks show that
PatchGen improves matched-backbone baselines in most evaluated configurations, yields 
\blue{positive average gains in the reported matched comparisons}, 
and remains competitive with methods built on vision-language backbones, without requiring text inputs or text-side adaptation objectives.

\section{Related Work}

We summarize the most relevant literature here and defer a broader discussion to Appendix~B.
\textbf{Generalization under multiple shifts.}
Multi-domain generalization transfers from source to unseen domains through
alignment or pretrained-feature regularization~\cite{ganin2016domain,
li2018domain,cha2022miro,tan2024rethinking}. CCD addresses sequential novel
category discovery, while mDG+GCD jointly considers unseen domains and
unknown classes~\cite{zhang2022grow,kim2023proxy,wu2023metagcd,
cendra2024promptccd,tan2024interpret}. PatchGen provides a shared
sample-dependent patch-selection mechanism across these settings.
\textbf{Causal generalization and patch selection.}
Causal DG mainly models invariant mechanisms at the domain or representation level~\cite{liu2025caurdg,wang2025exploring,gong2025causal}, whereas token pruning and attention visualization primarily target efficiency or post-hoc
interpretation~\cite{rao2021dynamicvit,abnar2020quantifying}. PatchGen instead
jointly learns soft patch masks for generalization through task supervision
and regularization.

\section{Theoretical Motivation}
\label{sec:theory}

\textbf{Setup and structural assumptions.}
Let $X\in\mathcal X$, $Y\in\mathcal Y$, and $D\in\mathcal D$
denote an input image, its class label, and its domain, respectively.
Let $[P]=\{1,\ldots,P\}$ denote the patch-index set.
A backbone $f_\theta$ produces the patch representations
\[
Z(X)
=
[z_1(X),\ldots,z_P(X)]^\top
\in\mathbb R^{P\times d_z},
\]
where $z_p(X)\in\mathbb R^{d_z}$ is the representation of patch $p$.
We use $\mathbb P_d$ and $\mathbb E_d$ to denote probability and
expectation under domain $D=d$.
For each image, we posit an unobserved sample-dependent partition
\begin{equation}
C^\star(X)\mathbin{\dot\cup}R^\star(X)=[P],
\label{eq:oracle_partition}
\end{equation}
where $C^\star(X)$ is the \emph{oracle intra-image predictive subset}
and $R^\star(X)$ is its complementary context.
For a mask $\mathbf m\in[0,1]^P$, define the masked concatenation
$
\Phi_{\mathbf m}(X)
=
\operatorname{vec}
\left(
m_1z_1(X),\ldots,m_Pz_P(X)
\right)
\in\mathbb R^{Pd_z}.
$
Let $\mathbf m_C^\star(X)\in\{0,1\}^P$ denote the oracle
predictive-subset mask, and let $\mathbf 1_P$ denote the all-ones vector.
We define
$
\Phi_C(X)
:=
\Phi_{\mathbf m_C^\star(X)}(X),
\Phi_R(X)
:=
\Phi_{\mathbf 1_P-\mathbf m_C^\star(X)}(X),
\Phi_{\mathrm{all}}(X)
:=
\Phi_{\mathbf 1_P}(X).
$
We omit the argument $X$ when no ambiguity arises.

\paragraph{Shift taxonomy.}
To align the analysis with our evaluation settings, we consider:
(i) \emph{data shifts}, modeled as changes in the complementary-context
distribution while the joint law of $(\Phi_C,Y)$ remains fixed;
(ii) \emph{target shifts}, modeled through a partition into known and
unknown classes under an oracle clusterability condition; and
(iii) \emph{all shifts}, in which these two modeled components occur
simultaneously.
The following results apply only to these explicit shift models and do not
provide guarantees for arbitrary distribution shifts.

\begin{assumption}[Representation-measurable oracle partition]
\label{ass:oracle_measurability}
The oracle mask $\mathbf m_C^\star(X)$ is a deterministic function of
$\Phi_{\mathrm{all}}(X)$.
\end{assumption}

Assumption~\ref{ass:oracle_measurability} prevents the oracle mask from
introducing side information unavailable in the full patch representation.

\begin{assumption}[Stable predictive sufficiency]
\label{ass:stable_sufficiency}
For every domain $d\in\mathcal D$,
$
\mathbb P_d
\left(
Y\mid\Phi_{\mathrm{all}}
\right)
=
\mathbb P_d
\left(
Y\mid\Phi_C
\right)
=
\mathbb P
\left(
Y\mid\Phi_C
\right).
$
\end{assumption}

The first equality states that the full patch representation provides no
additional label information beyond $\Phi_C$.
Thus, complementary context may remain marginally label-correlated, but is
conditionally redundant once $\Phi_C$ is given.
The second equality states that the label-predictive mechanism based on
$\Phi_C$ is shared across domains.
Complementary context may therefore include both domain-varying shortcuts
and domain-stable but conditionally redundant regions.

\begin{assumption}[Bounded oracle-subset size]
\label{ass:compact_support}
There exist constants $s\le P$ and $B>0$ such that
$
|C^\star(X)|\le s,
\qquad
\|z_p(X)\|_2\le B
$
for every image $X$ and patch $p\in[P]$.
\end{assumption}

\textbf{Oracle predictive-subset advantage.}
For a norm bound $\Lambda>0$, define the scalar-valued score classes
$
\mathcal H_C
=
\left\{
X\mapsto
\left\langle
\mathbf v,\Phi_C(X)
\right\rangle:
\|\mathbf v\|_2\le\Lambda
\right\},
$
and
$
\mathcal H_{\mathrm{all}}
=
\left\{
X\mapsto
\left\langle
\mathbf v,\Phi_{\mathrm{all}}(X)
\right\rangle:
\|\mathbf v\|_2\le\Lambda
\right\}.
$

\begin{proposition}
[Oracle sufficiency and oracle-subset-size-dependent complexity]
\label{prop:oracle_generalization}
Under Assumptions~\ref{ass:stable_sufficiency}
and~\ref{ass:compact_support}, for every domain $d\in\mathcal D$, the
unrestricted Bayes risk based on $\Phi_C$ equals that based on
$\Phi_{\mathrm{all}}$.
Moreover, their empirical Rademacher complexities satisfy
\begin{equation}
\widehat{\mathfrak R}_n(\mathcal H_C)
\le
{\Lambda B\sqrt{s}}/{\sqrt n},
\widehat{\mathfrak R}_n(\mathcal H_{\mathrm{all}})
\le
{\Lambda B\sqrt{P}}/{\sqrt n}.
\label{eq:oracle_complexity_comparison}
\end{equation}
\end{proposition}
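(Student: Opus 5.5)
The proof splits into two independent parts: a Bayes-risk identity that follows from Assumption~\ref{ass:stable_sufficiency}, and two Rademacher bounds obtained from the standard linear-class argument using Assumption~\ref{ass:compact_support}.

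\textbf{Bayes risk.} Fix a domain $d$ and a loss $\ell$. The unrestricted Bayes risk based on a representation $\Psi$ is $\inf_g \mathbb E_d[\ell(g(\Psi),Y)]$. By conditioning, this equals $\mathbb E_d\big[\inf_a \mathbb E_d[\ell(a,Y)\mid \Psi]\big]$, so it depends only on the conditional law $\mathbb P_d(Y\mid\Psi)$.

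The first equality in Assumption~\ref{ass:stable_sufficiency} gives $\mathbb P_d(Y\mid\Phi_{\mathrm{all}})=\mathbb P_d(Y\mid\Phi_C)$. Hence the pointwise optimal action for $\Phi_{\mathrm{all}}$ is a function of $\Phi_C$, and the risk based on $\Phi_{\mathrm{all}}$ is at least the risk based on $\Phi_C$.

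For the reverse inequality I will argue that $\Phi_C$ is a function of $\Phi_{\mathrm{all}}$. This makes any predictor built on $\Phi_C$ also a predictor built on $\Phi_{\mathrm{all}}$. Assumption~\ref{ass:oracle_measurability} delivers this directly, but it is not listed among the proposition's hypotheses. I therefore plan to avoid it. The sufficiency statement already says that the conditional risk given $\Phi_{\mathrm{all}}$ equals the conditional risk given $\Phi_C$, almost surely. Taking the inner infimum and then expectations gives equality of the two Bayes risks, regardless of which representation is coarser.

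Making this measure-theoretically clean is the main subtlety. I will assume $\ell$ is such that pointwise minimizers exist and are measurable (for example, $0$–$1$ or log loss with finite $\mathcal Y$), and phrase everything via regular conditional distributions.

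\textbf{Complexity.} For $\mathcal H_C$, take a sample $X_1,\dots,X_n$ and Rademacher signs $\sigma_i$. Cauchy–Schwarz gives
\[
\widehat{\mathfrak R}_n(\mathcal H_C)=\tfrac1n\mathbb E_\sigma\sup_{\|\mathbf v\|_2\le\Lambda}\Big\langle \mathbf v,\textstyle\sum_i\sigma_i\Phi_C(X_i)\Big\rangle\le\tfrac{\Lambda}{n}\mathbb E_\sigma\Big\|\textstyle\sum_i\sigma_i\Phi_C(X_i)\Big\|_2 .
\]
By Jensen's inequality and the independence of the signs, this is at most $\tfrac{\Lambda}{n}\big(\sum_i\|\Phi_C(X_i)\|_2^2\big)^{1/2}$.

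The key step is the per-sample norm bound. The vector $\Phi_C(X)$ has nonzero blocks only on $C^\star(X)$, so
\[
\|\Phi_C(X)\|_2^2=\sum_{p\in C^\star(X)}\|z_p(X)\|_2^2\le |C^\star(X)|\,B^2\le sB^2 .
\]
Substituting yields $\Lambda B\sqrt{s}/\sqrt n$. The same computation with all $P$ blocks gives $\|\Phi_{\mathrm{all}}(X)\|_2^2\le PB^2$, hence $\Lambda B\sqrt P/\sqrt n$.

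The sample-dependent support is harmless here because the bound is applied pointwise for each $X_i$. Since $s\le P$, the first bound is no larger than the second, and strictly smaller when $s<P$.
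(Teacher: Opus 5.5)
Your proposal is correct and follows the same route as the paper. The Bayes-risk equality comes from the first equality in Assumption~\ref{ass:stable_sufficiency}, and both complexity bounds come from the same Cauchy--Schwarz/Jensen computation with the per-sample bounds $\|\Phi_C(X_i)\|_2\le B\sqrt s$ and $\|\Phi_{\mathrm{all}}(X_i)\|_2\le B\sqrt P$; writing the Bayes risk as $\mathbb E_d[\inf_a \mathbb E_d[\ell(a,Y)\mid\Psi]]$, so that a.s.\ equality of the two conditional laws suffices without Assumption~\ref{ass:oracle_measurability}, makes explicit what the paper's one-line argument leaves implicit.
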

Here, $\widehat{\mathfrak R}_n$ denotes empirical Rademacher complexity.
Thus, restricting prediction to the oracle subset preserves optimal
predictive information while admitting an oracle-subset-size-dependent complexity
bound.
The oracle-subset bound is no larger than the all-patch bound and is
strictly tighter whenever $s<P$.
The full proof is provided in
Appendix~C.3.

\textbf{Data shifts: complementary-context variation.}
PatchGen learns a sample-dependent soft mask
$
\mathbf m_\phi(X)\in[0,1]^P.
$
Let
$
G:\mathbb R^{Pd_z}\rightarrow\mathbb R^{d_g}
$
be a fixed non-expansive map satisfying
$
\|G(u)-G(v)\|_2\le\|u-v\|_2.
$
Let $h$ be a fixed classifier operating on $\mathbb R^{d_g}$, and let $\ell$ denote its loss.
Define the learned selected representation and the oracle representation as
$
\widetilde z_\phi^+(X)
=
G\!\left(
\Phi_{\mathbf m_\phi(X)}(X)
\right),
\widetilde z_C(X)
=
G\!\left(
\Phi_{\mathbf m_C^\star(X)}(X)
\right).
$

For domain $d$, define the normalized mask-approximation error
$
\epsilon_d(\phi)
=
\mathbb E_d
\left[
{
\|\mathbf m_\phi(X)-\mathbf m_C^\star(X)\|_1
}/{P}
\right],
$
and define the corresponding risk as
$
R_d(h,\phi)
=
\mathbb E_d
\left[
\ell\!\left(
h(\widetilde z_\phi^+(X)),Y
\right)
\right].
$

\begin{proposition}[Robustness under complementary-context data shifts]
\label{prop:data_shift}
Consider two domains $d,d'\in\mathcal D$ satisfying
$
\mathbb P_d(\Phi_C,Y)
=
\mathbb P_{d'}(\Phi_C,Y),
$
but potentially differing in their complementary-context distributions.
If the mapping
$
u\mapsto\ell(h(u),y)
$
is $L$-Lipschitz for every $y\in\mathcal Y$, then
$
\left|
R_d(h,\phi)-R_{d'}(h,\phi)
\right|
\le
LBP
\left(
\epsilon_d(\phi)+\epsilon_{d'}(\phi)
\right).
$
\end{proposition}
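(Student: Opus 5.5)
Write the oracle risk as $R^C_d(h)=\mathbb E_d[\ell(h(\widetilde z_C(X)),Y)]$ and split with the triangle inequality:
\[
|R_d(h,\phi)-R_{d'}(h,\phi)|\le |R_d(h,\phi)-R^C_d(h)|+|R^C_d(h)-R^C_{d'}(h)|+|R^C_{d'}(h)-R_{d'}(h,\phi)|.
\]
The middle term should vanish and the two outer terms should each be controlled by a mask-approximation error.

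\textbf{Middle term.} The quantity $\widetilde z_C(X)=G(\Phi_C(X))$ is a fixed measurable function of $\Phi_C(X)$, because $G$ is fixed. So $\ell(h(\widetilde z_C),Y)$ is a fixed function of the pair $(\Phi_C,Y)$. Its expectation therefore depends only on the joint law of $(\Phi_C,Y)$. That law is assumed equal under $d$ and $d'$, so $R^C_d(h)=R^C_{d'}(h)$. This is the step where variation in the complementary context is eliminated: $R^\star$ never enters the oracle risk.

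\textbf{Outer terms.} Fix a domain $d$.
\begin{itemize}
\item By the $L$-Lipschitz property of $u\mapsto\ell(h(u),y)$, pointwise
\[
|\ell(h(\widetilde z^+_\phi),Y)-\ell(h(\widetilde z_C),Y)|\le L\|\widetilde z^+_\phi-\widetilde z_C\|_2 .
\]
\item Since $G$ is non-expansive, this is at most $L\|\Phi_{\mathbf m_\phi}(X)-\Phi_{\mathbf m^\star_C}(X)\|_2$.
\item Writing $\Delta_p=m_{\phi,p}(X)-m^\star_{C,p}(X)$, the vectorized difference has squared norm $\sum_p\Delta_p^2\|z_p\|_2^2$.
\item Assumption~\ref{ass:compact_support} gives $\|z_p\|_2\le B$, so the norm is at most $B\|\Delta\|_2\le B\|\Delta\|_1$.
\end{itemize}
Taking $\mathbb E_d$ gives
\[
|R_d(h,\phi)-R^C_d(h)|\le LB\,\mathbb E_d\|\mathbf m_\phi-\mathbf m^\star_C\|_1=LBP\,\epsilon_d(\phi),
\]
where the last equality is the definition of the normalized error. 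The same argument gives the bound for $d'$. Adding the three terms yields the claim.

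\textbf{Main obstacle.} This is bookkeeping rather than a hard step. One must ensure the expectations are finite and interchangeable with the pointwise bound; it suffices that $\ell$ is integrable at $\widetilde z_C$, or one can simply assume bounded losses. One must also justify that $\widetilde z_C$ is $\sigma(\Phi_C)$-measurable, which is immediate from its definition. The passage from $\|\cdot\|_2$ to $\|\cdot\|_1$ is loose, which is consistent with the $P$ factor in the stated bound.
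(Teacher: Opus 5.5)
Your proposal is correct and follows the paper's proof essentially step for step. The paper uses the same triangle decomposition through the oracle risk $R_d(h,C)$. It kills the middle term because $\widetilde z_C=G(\Phi_C)$ is a fixed function of $\Phi_C$, so its risk depends only on $\mathbb P_d(\Phi_C,Y)$. It bounds each outer term by $LB\,\mathbb E_d\|\mathbf m_\phi-\mathbf m_C^\star\|_1=LBP\,\epsilon_d(\phi)$, using a masked-concatenation stability lemma that is exactly your chain: non-expansiveness of $G$, $\|z_p\|_2\le B$, and $\|\cdot\|_2\le\|\cdot\|_1$. Your integrability remarks are a harmless refinement that the paper leaves implicit.
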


\begin{figure}[t]
    \centering
    \includegraphics[width=0.99\linewidth]
    {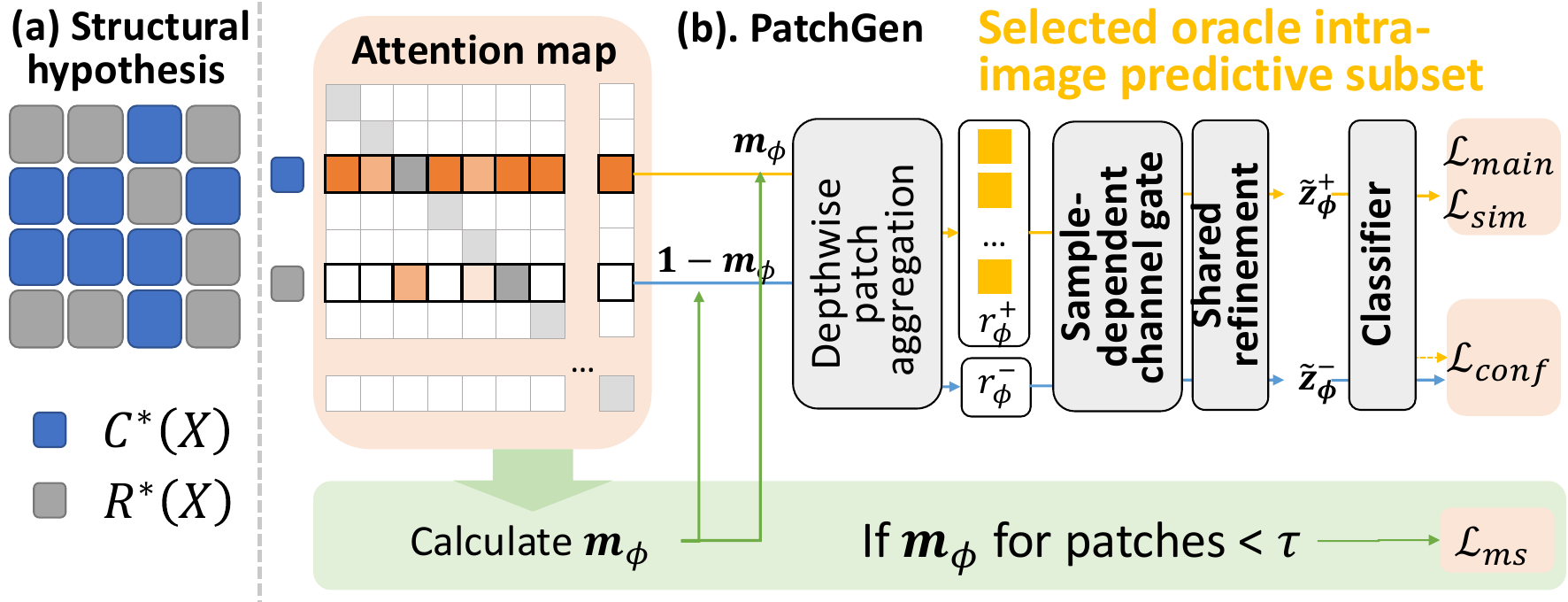}
    \vspace{-0.2cm}
    \caption{
    \textbf{Overview of PatchGen.}
    {(a) The structural hypothesis.
    (b) PatchGen learns a soft mask as a task-driven proxy for the oracle
    \blue{intra-image predictive subset}.}
    }
    \label{fig:framework}
\end{figure}

Here, $L$ is the Lipschitz constant, $B$ is the patch-feature norm bound,
and $P$ is the total number of patches.
Thus, under the modeled data shift, the risk discrepancy of the learned
predictor is controlled by its mask-approximation error in the two domains.
The full proof is provided in
Appendix~C.4.

\textbf{Target shifts: preservation of class separation.}
Let the complete label space be partitioned as
$
\mathcal Y
=
\mathcal Y_{\mathrm{known}}
\mathbin{\dot\cup}
\mathcal Y_{\mathrm{unknown}}.
$
Predictive sufficiency alone does not guarantee geometric separation of
known and unknown classes.
We therefore impose an explicit clusterability condition on the oracle
representation.

\begin{proposition}[Preservation of class separation under target shifts]
\label{prop:target_shift}
Suppose that there exist $r\ge0$, $\gamma>0$, and class centers
$
\mu_c\in\mathbb R^{d_g},
c\in
\mathcal Y_{\mathrm{known}}
\cup
\mathcal Y_{\mathrm{unknown}},
$
such that, for every sample with $Y=c$,
$
\|\widetilde z_C(X)-\mu_c\|_2\le r,
$
and
$
\|\mu_c-\mu_{c'}\|_2\ge\gamma
\qquad
\text{for all }c\ne c'.
$
Assume further that there exists $\eta\ge0$ such that
$
\|\widetilde z_\phi^+(X)-\widetilde z_C(X)\|_2
\le
\eta
$
for every sample. Then learned representations from distinct classes are separated by at
least
$
\gamma-2(r+\eta).
$
In particular, the learned class-conditional representation sets remain
pairwise disjoint whenever
$
\gamma>2(r+\eta).
$
\end{proposition}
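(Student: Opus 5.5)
The plan is a direct triangle-inequality argument in $\mathbb R^{d_g}$. Fix two samples $X$ and $X'$ with labels $Y=c$ and $Y'=c'$, where $c\ne c'$. I will lower-bound $\|\widetilde z_\phi^+(X)-\widetilde z_\phi^+(X')\|_2$ by starting from the center separation $\|\mu_c-\mu_{c'}\|_2\ge\gamma$. The center difference splits as
\[
\mu_c-\mu_{c'}
=
\bigl(\mu_c-\widetilde z_C(X)\bigr)
+\bigl(\widetilde z_C(X)-\widetilde z_\phi^+(X)\bigr)
+\bigl(\widetilde z_\phi^+(X)-\widetilde z_\phi^+(X')\bigr)
+\bigl(\widetilde z_\phi^+(X')-\widetilde z_C(X')\bigr)
+\bigl(\widetilde z_C(X')-\mu_{c'}\bigr).
\]
Applying the triangle inequality to this five-term decomposition gives
\[
\gamma\le \|\mu_c-\mu_{c'}\|_2\le r+\eta+\|\widetilde z_\phi^+(X)-\widetilde z_\phi^+(X')\|_2+\eta+r.
\]
Here the two $r$ terms come from the clusterability hypothesis applied to $X$ (class $c$) and to $X'$ (class $c'$). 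The two $\eta$ terms come from the uniform mask-proxy bound $\|\widetilde z_\phi^+-\widetilde z_C\|_2\le\eta$. Rearranging yields $\|\widetilde z_\phi^+(X)-\widetilde z_\phi^+(X')\|_2\ge\gamma-2(r+\eta)$. This holds for all such pairs, which gives the stated separation.

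For the disjointness claim, I will define the learned class-conditional set $S_c=\{\widetilde z_\phi^+(X): Y=c\}$. If $\gamma>2(r+\eta)$, then every pair of points drawn from $S_c$ and $S_{c'}$ with $c\ne c'$ lies at a strictly positive distance, so the two points cannot coincide and $S_c\cap S_{c'}=\emptyset$. Equivalently, each $S_c$ is contained in the closed ball $\bar B(\mu_c,r+\eta)$, again by the triangle inequality. Two such balls are disjoint because their centers lie at distance at least $\gamma>2(r+\eta)$.

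No earlier proposition is needed, and in particular the non-expansiveness of $G$ is not used here, since $\eta$ is assumed directly in representation space. The only point needing care is bookkeeping rather than a genuine obstacle. The hypotheses must hold for every sample, not merely in expectation, so the bound is pointwise. The conclusion is vacuous when $\gamma\le 2(r+\eta)$, which is consistent with the statement. If desired, I would add a remark that $\eta$ can be controlled via $G$ being non-expansive: $\eta\le B\,\|\mathbf m_\phi-\mathbf m_C^\star\|_1$, since $\|\Phi_{\mathbf m}-\Phi_{\mathbf m'}\|_2\le\sum_p|m_p-m'_p|\,\|z_p\|_2$.
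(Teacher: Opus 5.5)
Your proof is correct and is essentially the paper's argument: the paper first bounds $\|\widetilde z_\phi^+(X)-\mu_c\|_2\le\eta+r$ for each sample and then applies the reverse triangle inequality to the center separation, which is your five-term triangle-inequality decomposition grouped into two steps. Your closing remark that $\eta\le B\|\mathbf m_\phi-\mathbf m_C^\star\|_1$ matches the paper's follow-up corollary, which it derives from its mask-stability lemma.
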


The result establishes the stability of an assumed oracle class geometry under representation approximation.
Note that it does not state that predictive sufficiency alone guarantees unknown-class discovery.
See full proof in
Appendix~C.5. 

\textbf{All shifts: Componentwise implications.} 
Now, we have:

\begin{corollary}[Componentwise implications under all shifts]
\label{cor:all_shift}
Suppose the assumptions of
Propositions~\ref{prop:data_shift} and~\ref{prop:target_shift}
hold simultaneously for the modeled data- and target-shift components.
Then the learned predictor satisfies the complementary-context
risk-discrepancy bound of Proposition~\ref{prop:data_shift}, while the distance between learned representations from distinct classes remains at
least
$
\gamma-2(r+\eta).
$
\end{corollary}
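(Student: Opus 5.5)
The corollary follows by applying Proposition~\ref{prop:data_shift} and Proposition~\ref{prop:target_shift} separately to the same learned pair $(h,\phi)$, with no new inequality required. First, fix the learned mask network $\mathbf m_\phi$, the non-expansive map $G$, and the classifier $h$. These define $\widetilde z_\phi^+$ and $\widetilde z_C$ identically in both propositions, so both results speak about the same objects. The all-shift model is taken to be the conjunction of the two modeled components. For the data-shift component, we have domains $d,d'$ with $\mathbb P_d(\Phi_C,Y)=\mathbb P_{d'}(\Phi_C,Y)$ and an $L$-Lipschitz loss. For the target-shift component, we have class centers $\mu_c$ over $\mathcal Y_{\mathrm{known}}\cup\mathcal Y_{\mathrm{unknown}}$ with radius $r$, separation $\gamma$, and uniform approximation bound $\eta$.

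Second, since the hypotheses of Proposition~\ref{prop:data_shift} hold, we invoke it directly to obtain
\[
\left|R_d(h,\phi)-R_{d'}(h,\phi)\right|\le LBP\left(\epsilon_d(\phi)+\epsilon_{d'}(\phi)\right).
\]
The one point to check is that the presence of unknown classes does not disturb this bound. It does not: the argument of Proposition~\ref{prop:data_shift} runs through the pointwise estimate
\[
\|\widetilde z_\phi^+-\widetilde z_C\|_2\le \|\Phi_{\mathbf m_\phi}-\Phi_{\mathbf m_C^\star}\|_2\le B\|\mathbf m_\phi-\mathbf m_C^\star\|_1,
\]
which uses non-expansiveness of $G$ and $\|z_p\|_2\le B$. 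It then uses equality of the joint laws of $(\Phi_C,Y)$, which fixes the law of $(\widetilde z_C,Y)$ across $d$ and $d'$. Both ingredients are insensitive to how $\mathcal Y$ is partitioned, since $Y$ simply ranges over the full label space.

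Third, we invoke Proposition~\ref{prop:target_shift}. For $X$ with $Y=c$ and $X'$ with $Y=c'\neq c$, the triangle inequality gives
\[
\|\widetilde z_\phi^+(X)-\widetilde z_\phi^+(X')\|_2\ge \|\mu_c-\mu_{c'}\|_2-2r-2\eta\ge \gamma-2(r+\eta).
\]
This holds for every sample, and hence in every domain, because the clusterability and $\eta$-approximation conditions are stated uniformly over samples.

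The only real obstacle is interpretive. We must make sure the two hypothesis sets are jointly consistent and refer to the same $\phi$ and $G$. We must also confirm that the uniform conditions of Proposition~\ref{prop:target_shift} are meant to hold across all domains, including the shifted ones. I would state explicitly that the modeled all-shift setting assumes both, and then conclude by conjunction of the two bounds.
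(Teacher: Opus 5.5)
Your proposal is correct and matches the paper's proof: the paper also applies Proposition~\ref{prop:data_shift} to the data-shift component and Proposition~\ref{prop:target_shift} to the target-shift component of the same learned representation, and takes their conjunction without bounding any interaction term. Your extra checks are consistent with that argument but go slightly beyond what the paper writes out; these are that the known/unknown label partition does not affect the data-shift argument, and that the uniform clusterability and $\eta$-approximation conditions hold in every domain.
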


This corollary provides componentwise implications for the all-shift
setting.
It is not a unified joint-risk bound and does not control interactions between the two shift components. 
See full proof and additional discussion in
Appendix~C.6.

\section{Learning a \blue{Soft Predictive-Subset Proxy}}
\label{sec:method}

Figure~\ref{fig:framework} summarizes the oracle structural hypothesis and its operationalization in PatchGen.
The oracle \blue{intra-image predictive subset} $C^\star(X)$ and its mask
$\mathbf m_C^\star(X)$ are unobserved.
PatchGen therefore learns a sample-dependent \blue{soft predictive-subset mask}
$\mathbf m_\phi(X)$ as a task-driven proxy for
$\mathbf m_C^\star(X)$.
The learned mask and its complement are used to construct the selected and
complementary representations, respectively.
Additional design and implementation details are provided in
Appendix~D.

\textbf{\blue{Soft predictive-subset mask estimation.}}
Under the structural hypothesis, the oracle \blue{predictive-subset}
representation $\Phi_C(X)$ is sufficient for label prediction.
Because $\mathbf m_C^\star(X)$ is unavailable during training, we estimate
the learned \blue{soft predictive-subset mask} $\mathbf m_\phi(X)$ from
cross-patch interaction scores.
Let $S_\phi^{(a)}(X)\in\mathbb R^{P\times P}$ denote the pre-softmax interaction-score matrix of attention head $a$, where $H$ is the number of heads and $d_k$ is the query/key dimension of each head:
$
S_{\phi,p\rightarrow q}^{(a)}(X)
=
{
\left\langle
q_{\phi,p}^{(a)}(X),
k_{\phi,q}^{(a)}(X)
\right\rangle
}/{
\sqrt{d_k}
}.
$
The patch-selection score is obtained by averaging the outgoing interaction scores over heads and target patches:
\begin{equation}
m_{\phi,p}(X)
=
\sigma\!\left(
{1}/{HP}
\sum\nolimits_{a=1}^{H}
\sum\nolimits_{q=1}^{P}
S_{\phi,p\rightarrow q}^{(a)}(X)
\right),
\label{eq:predictive_patch_mask}
\end{equation}
where $\sigma(\cdot)$ denotes the sigmoid function.
The resulting soft mask is $\mathbf m_\phi(X)= (m_{\phi,1}(X),\ldots,m_{\phi,P}(X))\in(0,1)^P$.
The query and key projections are initialized so that the raw interaction scores begin in a moderate, non-saturated regime, allowing patch selectivity to emerge progressively during optimization.
Crucially, unlike post-hoc attention visualization methods~\cite{abnar2020quantifying}, the attention parameters and $\mathbf m_\phi(X)$ are jointly optimized by the generalization objectives below. 
The ablation in Appendix Figure~2 supports this distinction: standard attention and attention with only $\mathcal L_{ms}$ (Tasks~2.3--2.4) underperform the complete PatchGen objective (Tasks~3.3--3.5).

The softmax-normalized interaction scores are used to construct
attention-contextualized patch representations
$U_\phi(X)=[u_{\phi,1}(X),\ldots,u_{\phi,P}(X)]^\top$.
Specifically, for each head,
$
A_{\phi,p\rightarrow q}^{(a)}(X)
=
\operatorname{softmax}_{q}
\left(
{S_{\phi,p\rightarrow q}^{(a)}(X)}/
{\exp(t_\phi)}
\right),
$
where  $t_\phi$ is a
learned log-temperature parameter and the multi-head attention output defines $U_\phi(X)$.
We then use a learnable channel-wise patch aggregator
$\mathcal A_\rho$ to construct the selected and complementary aggregates:
$
r_\phi^+
=
\mathcal A_\rho
\left(
U_\phi,\mathbf m_\phi
\right),
r_\phi^-
=
\mathcal A_\rho
\left(
U_\phi,\mathbf 1_P-\mathbf m_\phi
\right).
$
The aggregator is implemented as a depthwise full-width
one-dimensional convolution over the patch dimension, with kernel size
equal to the total patch count $P$.
It therefore performs a channel-specific learnable global aggregation of
the masked patch features. For feature channel
$j\in\{1,\ldots,d_z\}$,
$
[\mathcal A_\rho(U,\mathbf m)]_j
=
\sum\nolimits_{p=1}^{P}
\rho_{j,p}\,m_p\,u_{p,j},
$
where
$\rho\in\mathbb R^{d_z\times P}$ contains learnable
channel- and patch-position-specific aggregation weights.
The same parameters $\rho$ are shared by the selected and complementary
branches, and no bias or additional normalization is used within
$\mathcal A_\rho$.
A sample-dependent channel gate is computed from the selected aggregate:
$
\mathbf w_\psi(X)
=
g_\psi(r_\phi^+)
\in(0,1)^{d_z}.
$
With $F_\omega$ as the shared feature-refinement map,
the final selected and complementary representations 
are
\begin{equation}
\widetilde z_\phi^+
=
F_\omega
\left(
\mathbf w_\psi(X)\odot r_\phi^+
\right),
\widetilde z_\phi^-
=
F_\omega
\left(
\mathbf w_\psi(X)\odot r_\phi^-
\right),
\nonumber
\end{equation}
where the classifier operates on $\widetilde z_\phi^+$.

\textbf{Low-score mask suppression.}
For a minibatch $\mathcal B$, define the set of weak mask entries
$
\mathcal W_\tau(\mathcal B)
=
\left\{
(i,p):
i\in\mathcal B,\;
p\in[P],\;
m_{\phi,i,p}<\tau
\right\}.
$
The mask-suppression loss is
\begin{equation}
\mathcal L_{ms}
= {1}/{|\mathcal W_\tau(\mathcal B)|} \cdot
\sum\nolimits_{(i,p)\in\mathcal W_\tau(\mathcal B)}
m_{\phi,i,p}, 
\end{equation}
when $|\mathcal W_\tau(\mathcal B)|>0,$ otherwise $\mathcal L_{ms} =0$.
We use $\tau=0.25$ in all experiments.
It pushes weak mask responses toward zero and sharpens the
separation between weakly and strongly weighted patches.
Importantly, it imposes no fixed sparsity, allowing the mask to select all patches when the entire image is task-relevant, as in tumor-dominant histopathology tiles.

\textbf{Selected-confidence regularization.}
Let $p_h(y\mid z)$ denote the softmax probability assigned to class $y$.
\blue{For labeled samples, the target is the ground-truth label; for unlabeled samples in CCD and mDG+GCD, we use the same non-differentiable pseudo-label $\bar Y_i$ as in the host discovery objective. Let $\bar{\mathcal B}$ denote the samples for which either a label or pseudo-label is available. We use the bounded confidence objective}
\begin{equation}
\mathcal L_{\mathrm{conf}}
=
\blue{-1/{|\bar{\mathcal B}|}
\sum\nolimits_{i\in\bar{\mathcal B}}
p_h(\bar Y_i\mid\widetilde z_{\phi,i}^{+})}.
\label{eq:conf_loss}
\end{equation}
This objective encourages the selected representation $\widetilde z_{\phi}^{+}$ to retain sufficient label evidence. Together with $\mathcal L_{ms}$, it prevents the mask from becoming trivially sparse: weak responses are suppressed, but the selected subset must still support confident task prediction. 
See details in {Appendix~D.}

\textbf{Class-conditional feature alignment.}
For a minibatch $\mathcal B$, let $\bar Y_i=Y_i$ for labeled samples and let $\bar Y_i$ be the task-specific pseudo-label for unlabeled samples. Define the active label set $\bar{\mathcal Y}_{\mathcal B} = \{\bar Y_i:i\in\mathcal B\}$ and the corresponding sample index set $\mathcal I_c = \{i\in\mathcal B:\bar Y_i=c\}$. The loss is
\begin{equation} 
\mathcal L_{sim} = \frac{1}{|\bar{\mathcal Y}_{\mathcal B}|} \sum_{c\in\bar{\mathcal Y}_{\mathcal B}} \left( 1- \frac{1}{|\mathcal I_c|^2} \sum_{i,j\in\mathcal I_c} \cos\!\left( \widetilde z_{\phi,i}^+, \widetilde z_{\phi,j}^+ \right) \right),  
\nonumber
\end{equation} 
which is termed class-conditional similarity loss.
The summation includes self-pairs, so a singleton class contributes zero.
For multi-domain tasks, minibatches draw from all source domains, enabling
cross-domain same-class alignment when such pairs are available.
Thus, $\mathcal L_{sim}$ serves as an empirical class-conditional alignment
surrogate, but does not directly minimize or guarantee zero conditional
mutual information.
It requires no explicit class prototypes and extends to unlabeled samples
using pseudo-labels from L-Reg for CCD and
mDG+GCD, respectively.

\textbf{Overall learning objective.}
Using $\widetilde z_\phi^+$ as the task representation, the overall objective is
$
\mathcal L_{\mathrm{PatchGen}}
=
\mathcal L_{\mathrm{main}}
+
\lambda_{ms}\mathcal L_{ms}
+
\lambda_{sim}\mathcal L_{sim}
+
\mathcal L_{\mathrm{conf}}.
$
Here, $\mathcal L_{\mathrm{main}}$ denotes the task-specific objective for
mDG, CCD, or mDG+GCD, while $\mathcal L_{ms}$ suppresses weak mask responses,
$\mathcal L_{sim}$ promotes within-class alignment, and
$\mathcal L_{\mathrm{conf}}$ regularizes confidence on the selected
representation with a gradient equivalent to that of a detached
complementary-confidence reference.
Unlike standard attention, PatchGen distinguishes an oracle intra-image
predictive subset from complementary context, which may be domain-varying,
domain-stable but label-correlated, or marginally uninformative.
It therefore learns a sample-dependent soft predictive-subset proxy rather
than relying on domain invariance alone.

\section{Experiments}

For all experiments, we fix the random seed to $1$ (stability across seeds and hyperparameters is reported in Appendix Table~10) and use the same hyperparameters across datasets unless stated otherwise.
To preserve the pretrained representation while enabling
parameter-efficient adaptation, we fine-tune only the layer-normalization
parameters of the backbone, following~\cite{de2023effectiveness,tan2025exploiting}.
Results that are re-implemented or re-evaluated using the officially released code are marked with *.
{\blue{We will release the training code and associated files upon publication to support reproduction.}}
\textbf{Appendix~G reports {1) PatchGen with full-model fine-tuning}, {2) Comparisons with other masking strategies} and {3) Standard attention}, {4) Ablation of proposed components},
{5) Sensitivity and robustness analysis}, and {6) More histopathological visualizations}. }

\subsection{MDG experiments for natural images}

\begin{figure}[t!]
\centering
\includegraphics[width=.85\linewidth]{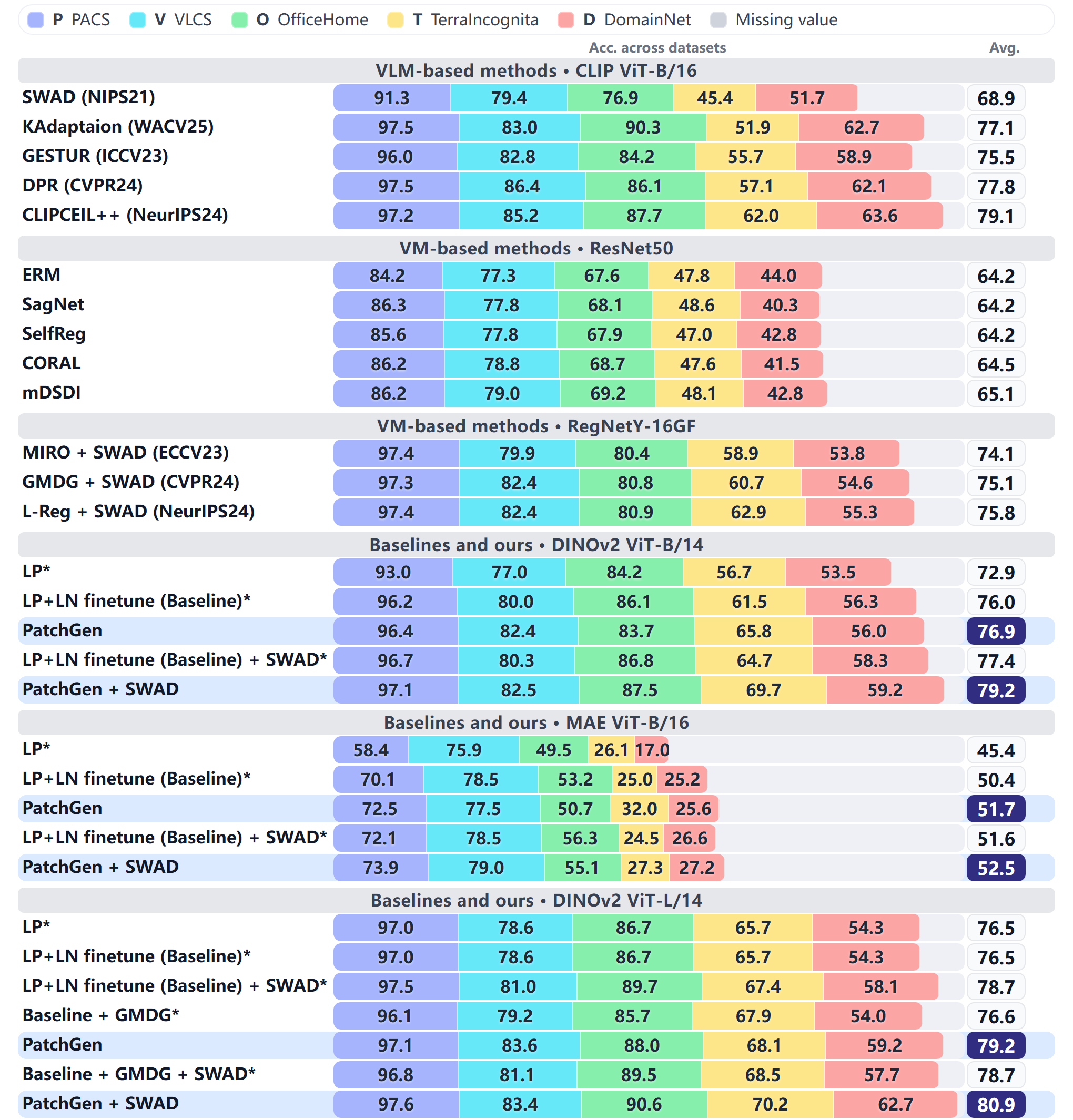} 
\vspace{-0.2cm}
\caption{\textbf{Main MDG results for natural images}: Main comparison between PatchGen and previous MDG methods, including VLM-based and VM-based methods.
Detailed results for all $36$ methods are in Appendix Table~1.}
\label{fig:mDG_results_main}
\end{figure}

We compare PatchGen with representative VLM-based, VM-based, and
causal-inspired DG methods.
The comparison methods are grouped as follows.
\textbf{VLM-based methods}:
\blue{SWAD~\cite{cha2021swad} with VLM,}
KAdaptation~\cite{lee2025domain},
GESTUR~\cite{lew2023gestur},
DPR~\cite{cheng2024disentangled},
CLIPCEIL++~\cite{yu2024clipceil}.
\textbf{VM-based methods}:
ERM~\cite{vapnik1999overview},
SagNet~\cite{nam2021reducing},
SelfReg~\cite{kim2021selfreg},
CORAL~\cite{sun2016deep},
mDSDI~\cite{bui2021exploiting},
{GMDG}~\cite{tan2024rethinking},
{L-Reg}~\cite{tan2024interpret}.
\textbf{Causal-based methods}:
CI-DGA~\cite{gong2025causal},
SMIDG~\cite{wang2025exploring},
BFMix~\cite{zhang2024mix},
CauRDG~\cite{liu2025caurdg}.
A complete list of all 36 methods is given in Appendix~E.
Following methods like CLIPCEIL++, we report results of a single seed. See across-seed results in Appendix Table~10.

\textbf{Experimental settings.}
We conduct experiments using the DomainBed benchmark suite~\cite{gulrajani2020search} under the standard leave-one-domain-out evaluation protocol.
Our method is evaluated with DINOv2~\cite{oquab2023DINOv2} (ViT-B/14 and ViT-L/14) and MAE~\cite{he2022masked} backbones on five real-world datasets: PACS~\cite{li2017deeper}, VLCS~\cite{fang2013unbiased}, OfficeHome~\cite{venkateswara2017deep}, TerraIncognita~\cite{beery2018recognition}, and DomainNet~\cite{peng2019moment}.
For fair comparison, we include the following comparisons using the same backbones and identical shared hyperparameters as our method:
(1) linear probing (LP);
(2) LP with layer-normalization fine-tuning (LP + LN);
\blue{(3) GMDG~\cite{tan2024rethinking} under the matched DINOv2 ViT-L/14 setting reported in Appendix Table~1.}
Where applicable, we additionally report results obtained by integrating
SWAD~\cite{cha2021swad}.
$\lambda_{ms}$ and $\lambda_{sim}$ are selected from
$\{0.01,0.001,0.0001\}$ using source-domain validation splits only.
\textbf{Results.}
\textbf{Matched comparisons.}
Figure~\ref{fig:mDG_results_main} shows that, for natural images under identical backbones and hyperparameters, PatchGen improves the reported averages over LP+LN baselines. It improves over comparable VM-based DG methods such as GMDG.
\textbf{Reference comparisons.}
Figure~\ref{fig:mDG_results_main} also shows that PatchGen is competitive with previously reported methods using different backbones and adaptation protocols. 
Specifically, 
\blue{DINOv2 ViT-L/14 + PatchGen achieves 79.2\% without SWAD and 80.9\% with SWAD, the highest reported average in our reference comparison.}
Figure~\ref{fig:mDG_results_causal} shows that, with comparable backbones, our method matches or surpasses previous causal-inspired mDG approaches.
These results empirically support PatchGen’s effectiveness under data shifts and are consistent with Proposition~\ref{prop:data_shift}.

\begin{figure}
  \centering
  \includegraphics[width=.77\columnwidth]{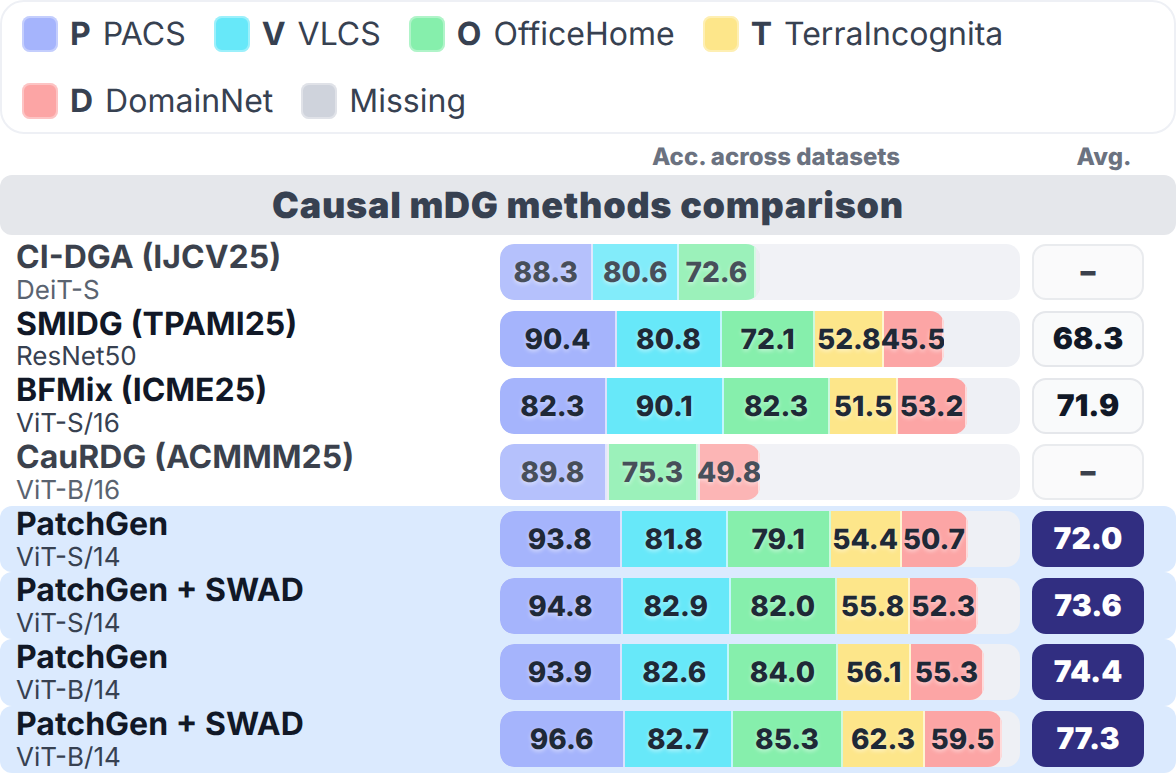}
  \vspace{-0.2cm}
  \caption{\textbf{MDG results for natural images}: Reference comparison between the proposed and selected causal-inspired DG methods using VMs. See details in Appendix Table~2.}
  \label{fig:mDG_results_causal}
\end{figure}

\subsection{MDG experiments for histopathological images}

\begin{figure}[t!]
  \centering
  \includegraphics[width=.9\columnwidth]{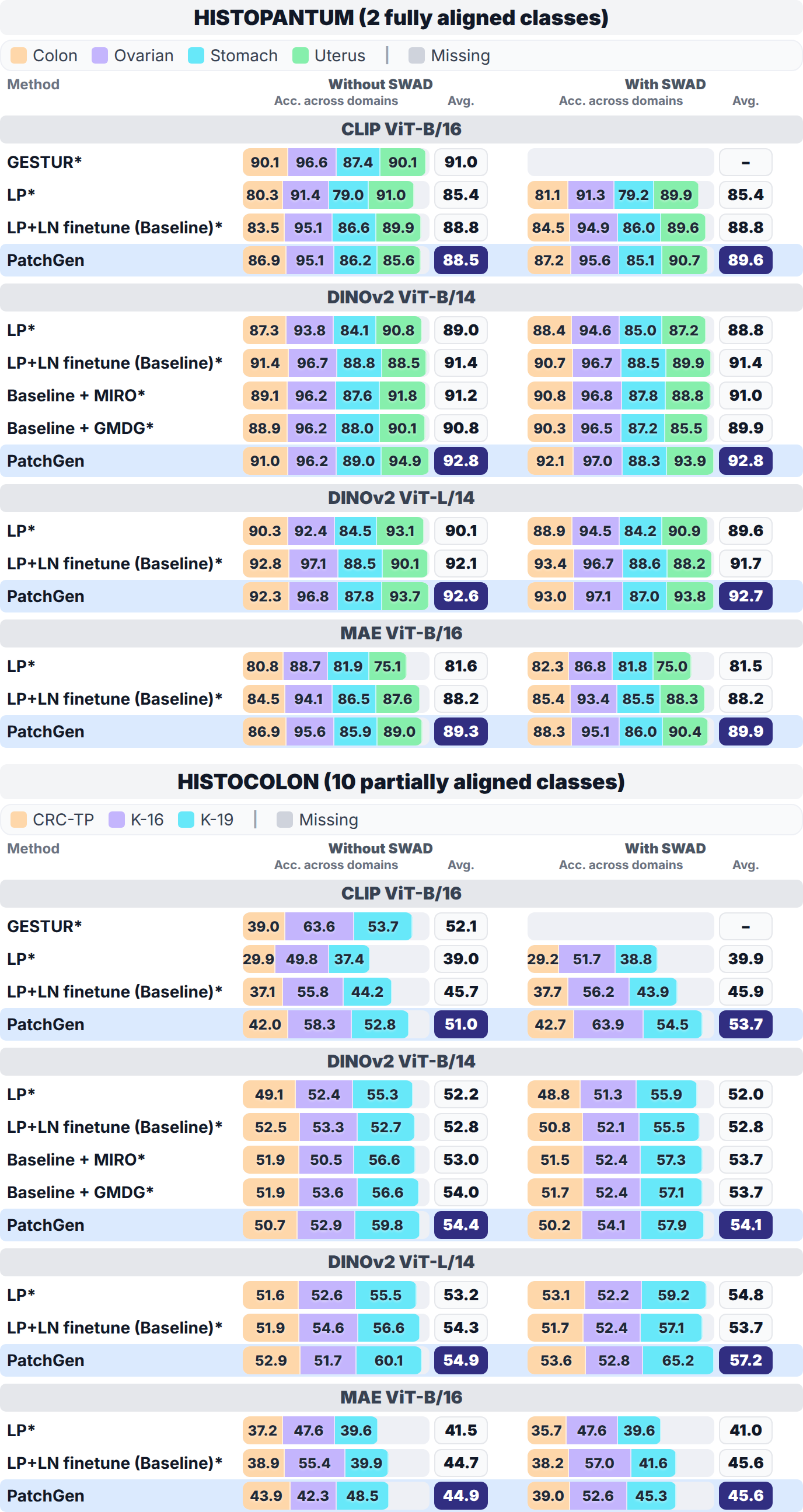}
  \caption{\textbf{MDG results for histopathological images}: Comparison between the baselines and PatchGen with average and per-domain results across different backbones on the histopathological datasets (HISTOPANTUM and HISTOCOLON). 
  Detailed results are in Appendix Table~3\&4.}
  \label{fig:path_mdg_vis}
\end{figure}

\textbf{Datasets.}
We evaluate PatchGen on HISTOPANTUM~\cite{zamanitajeddin2024benchmarking}, where four organs define domains sharing a binary normal/tumor label space,
and on HISTOCOLON, constructed from CRC-TP~\cite{javed2020cellular},
K-16~\cite{kather2016multi}, and K-19~\cite{kather2019predicting}.
\blue{HISTOCOLON is treated as a harder partial-label histopathology DG benchmark rather than a fully closed-set mDG protocol.}
All experiments follow leave-one-domain-out evaluation, with model selection
restricted to source data to prevent any leakage. More details are in Appendix~E.3.

\textbf{Experimental settings.}
We conduct experiments on the aforementioned datasets under the standard leave-one-domain-out protocol in the mDG setting,
 with large-scale pretrained Vision Transformers as backbones: DINOv2~\cite{oquab2023DINOv2} (ViT-B/14, ViT-L/14) and MAE~\cite{he2022masked}.
For fair comparison, we include:
(1) linear probing (LP);
(2) LP with layer-norm fine-tuning (LP + LN);
(3) MIRO~\cite{cha2022miro} with DINOv2 ViT-B/14; and
(4) GMDG~\cite{tan2024rethinking} with DINOv2 ViT-B/14.
VLM-based methods such as DPR~\cite{cheng2024disentangled} and CLIPCEIL++~\cite{yu2024clipceil} rely on refining text-image alignment. 
\blue{This partial-label limitation applies to HISTOCOLON, whose domains have partially overlapping label spaces; HISTOPANTUM uses a shared binary label space, so CLIP-based GESTUR is reported there as a reference VLM-compatible baseline.}
For comparisons under a VLM backbone, we report CLIP-based GESTUR~\cite{lew2023gestur}, which does not rely on text–image alignment, and also apply PatchGen to CLIP~\cite{radford2021learning} to compare against purely VM-based methods.
All experiments use identical experimental settings whenever applicable; otherwise,
method-specific settings follow their official implementations. 
Where applicable, we also report results with SWAD~\cite{cha2021swad}.  
For all experiments, $\lambda_{ms}=0.001$, $\lambda_{sim}=0.001$.

\textbf{Results.}
As shown in Figure~\ref{fig:path_mdg_vis}, our method improves the matched baselines in average performance across most backbone and evaluation settings, achieving the best or tied-best average results in the reported comparisons.
Although primarily designed for purely visual models, where image features are well preserved, it also yields clear gains when applied to CLIP.
PatchGen substantially improves the matched CLIP LP+LN baseline and remains
competitive with GESTUR on HISTOCOLON.
Figures~\ref{fig:path_exp1} and Appendix Figure~4 provide qualitative
morphological evidence from histopathology. 

\begin{figure}[!t]
\centering
\includegraphics[width=0.85\linewidth]{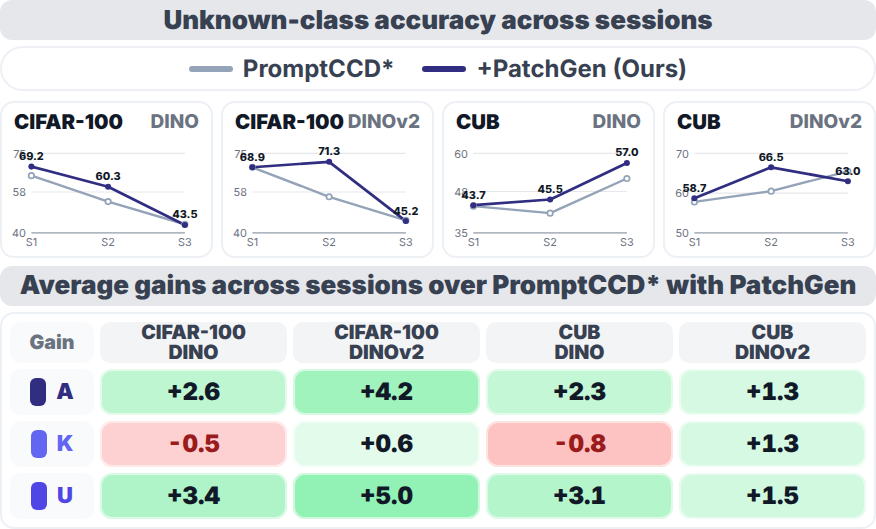}
\vspace{-0.2cm}
\caption{\textbf{CCD results (A/K/U = all/known/unknown classes)}: Unknown-class accuracy across CCD sessions and average A/K/U gains of PatchGen over PromptCCD for DINO/DINOv2. Detailed results for each dataset, backbone, and session are in Appendix Table~5.}
\label{fig:ccd_vis}
\end{figure}

\subsection{Experiments on CCD tasks}

\textbf{Experimental settings.}
Following PromptCCD’s experimental protocol~\cite{cendra2024promptccd}, we conduct experiments on CIFAR-100~\cite{krizhevsky2009learning} and CUB~\cite{wah2011caltech} using 
\blue{DINO ViT-B/16 and DINOv2 ViT-B/14}. We reproduce PromptCCD with its official code and then apply our method in the same framework. For each comparison, methods share identical settings whenever applicable; otherwise,
method-specific settings follow their official implementations. 
We set 
$\lambda_{ms}=1e^{-3}, \lambda_{sim}=1e^{-3}$ in all experiments.
{\blue{PatchGen does not introduce a new pseudo-labeling rule in CCD; it uses the detached pseudo-labels from the PromptCCD host objective.}}

\textbf{Results.}
As shown in Figure~\ref{fig:ccd_vis}, PatchGen yields its largest average
gains on unknown classes.
The gains are more evident in several of the more challenging later-session settings.
\blue{The lower panel shows that the largest average improvements occur on unknown
classes, while all-class accuracy also improves, and known-class performance
is largely preserved.}
\blue{These results support the practical value of the soft predictive-subset proxy for unknown-class generalization, consistent with the class-separation motivation of
Proposition~\ref{prop:target_shift}.}


\subsection{Experiments on mDG+GCD tasks}

\begin{figure}[t]
\centering
\includegraphics[width=0.9\linewidth]{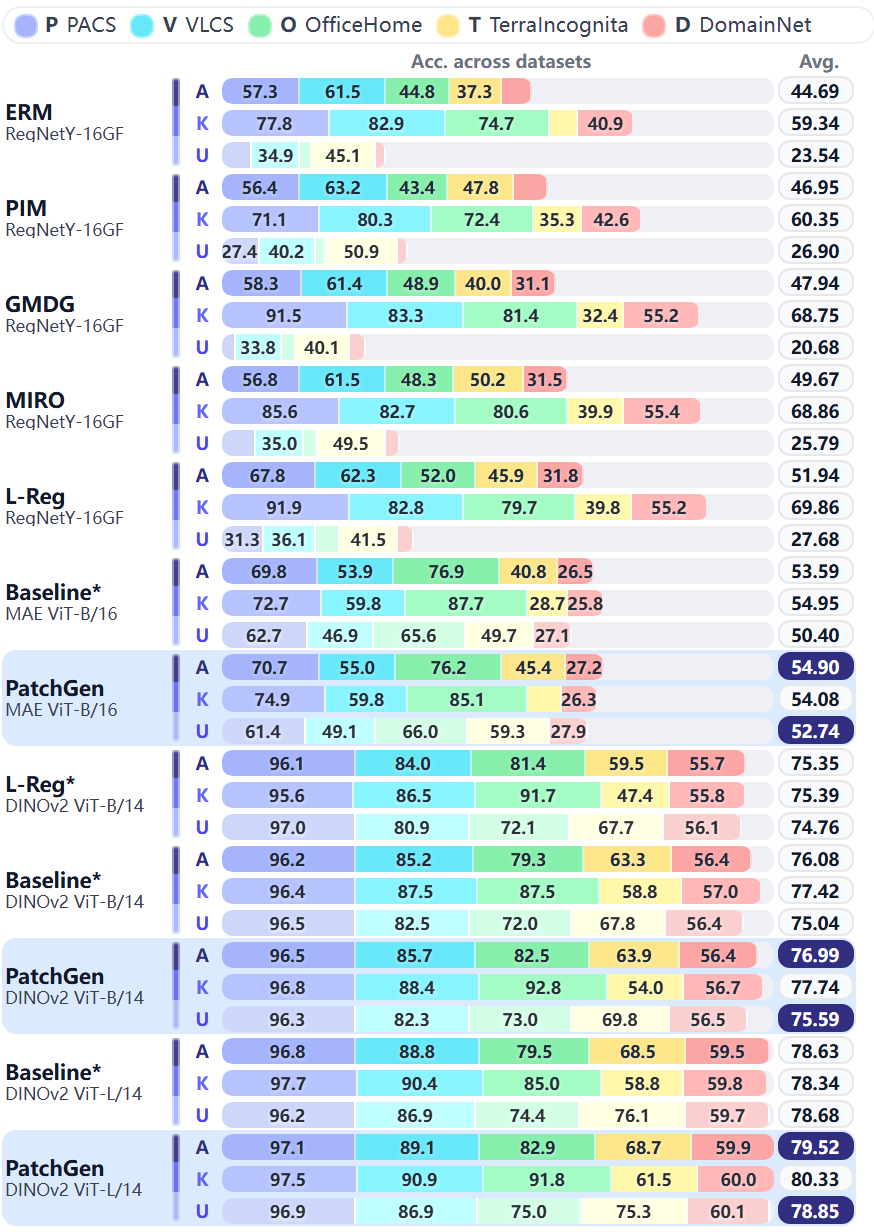}
\vspace{-0.2cm}
\caption{\textbf{MDG+GCD results}: Stacked per-dataset comparison across prior methods, matched baselines, 
and PatchGen under simultaneous unseen-domain and unknown-class shifts. 
See numeric results in Appendix Tables~6\&7.}
\label{fig:mdg_gcd_vis}
\end{figure}

\textbf{Experimental settings.}
Following L-Reg~\cite{tan2024interpret}, we use the same natural image datasets originally designed for mDG to construct the mDG+GCD setting.
Half of the classes are designated as unknown, and their training samples
are provided without labels. Although the unlabeled set in this protocol
contains only samples from unknown classes, the learner is not given their
semantic identities or class assignments. \blue{Thus, this is an unknown-class-only unlabeled-pool protocol following L-Reg, rather than a mixed known/unknown unlabeled-pool GCD protocol.} We follow the evaluation protocol
of L-Reg for direct comparison, and report baseline results using various
backbones, including L-Reg with DINOv2 ViT-B/14. We set $\lambda_{ms}=0.001$ and $\lambda_{sim}=0.001$ for all experiments.
{\blue{As in CCD, pseudo-labels are inherited from the L-Reg host protocol and are treated as non-differentiable targets for PatchGen-specific losses.}}

\textbf{Results.}
{Figure~\ref{fig:mdg_gcd_vis} summarizes the mDG+GCD comparisons, where models must handle both unseen test domains and unknown classes.}
PatchGen improves average all-class accuracy for all three matched backbones.
Unknown-class accuracy also improves on average, while the MAE setting shows
a modest trade-off in known-class accuracy.


\subsection{Patch-selection diagnostics and efficiency}
\label{sec:patch_diag}

\begin{figure}[!t]
    \centering
    \includegraphics[width=\linewidth]{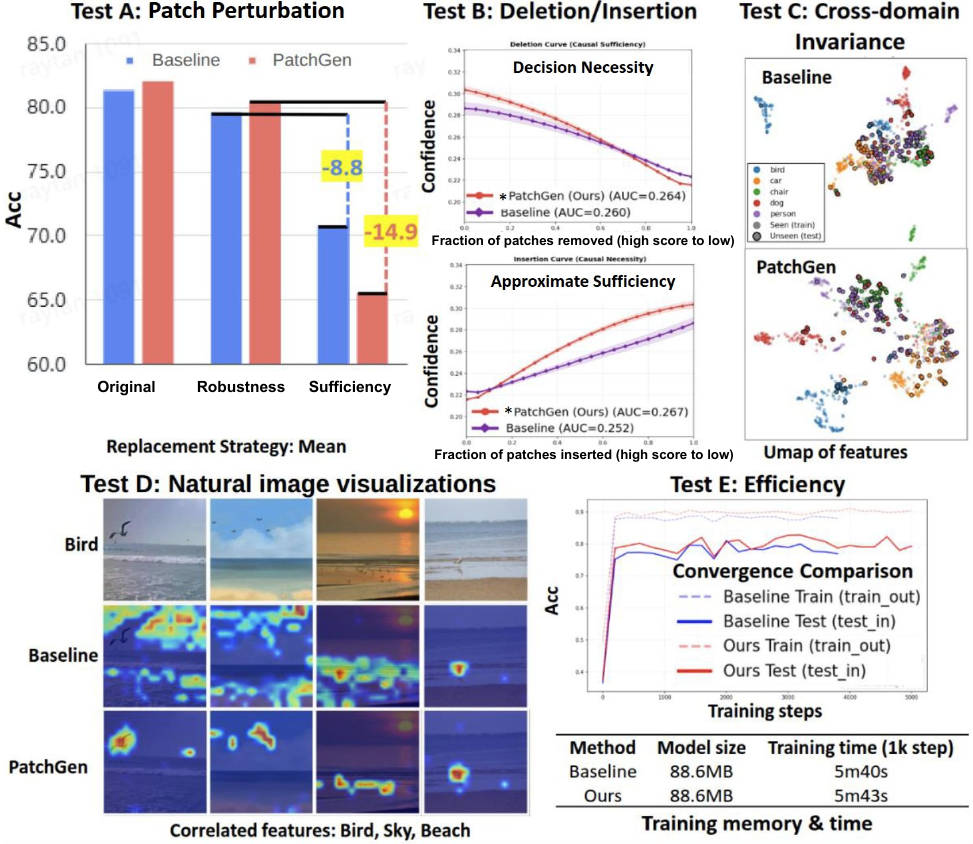}
    \vspace{-0.4cm}
\caption{\textbf{Patch-selection diagnostics} on VLCS (SUN09 unseen) with DINOv2-ViT-S/14. 
    (A)~Patch perturbation: keeping selected patches is robust, while replacing selected patches causes a sharp accuracy drop. (B)~Insertion/deletion AUC curves. (C)~UMAP of baseline vs. PatchGen features showing tighter cross-domain clusters. (D)~Visualization on natural images: PatchGen focuses on label-relevant objects while the baseline is distracted by spurious context. (E)~Training efficiency comparison.
    }
    \label{fig:patch_diagnostics}
\end{figure}

To further examine whether PatchGen selects decision-relevant patches beyond post-hoc attribution, we perform a comprehensive suite of diagnostic tests on VLCS (unseen domain: SUN09) using DINOv2-ViT-S/14 under the mDG setting. Results are visualized in Figure~\ref{fig:patch_diagnostics}.

\textbf{A. Patch perturbation test.}
Let $\widehat C_\phi(X)$ where $\widehat C_\phi(X)=\{p:m_{\phi,p}(X)\ge 0.5\}$ denote the thresholded selected patch set, and let
$\widehat R_\phi(X)=[P]\setminus\widehat C_\phi(X)$ denote its thresholded complement.
We assess approximate sufficiency by preserving $\widehat C_\phi(X)$ while
perturbing $\widehat R_\phi(X)$, and assess empirical decision necessity for the trained classifier by
perturbing $\widehat C_\phi(X)$ while preserving
$\widehat R_\phi(X)$.
A small accuracy drop in the first test and a large drop in the second
provide evidence of approximate sufficiency and decision relevance for the
trained classifier under this perturbation protocol.
\textbf{B. Deletion/insertion.}
Patches are ranked by each method and progressively inserted or deleted.
We report insertion AUC ($\uparrow$), for which faster confidence recovery is
better, and deletion AUC ($\downarrow$), for which faster confidence removal
is better.
PatchGen achieves stronger insertion and deletion performance than the
baselines, indicating a more decision-relevant patch ranking.
\textbf{C. Cross-domain representation geometry.}
The UMAP visualization shows more compact class clusters with less apparent
domain-wise separation, a geometry consistent with reduced domain-related
variation.
\textbf{D. Natural image visualizations.}
Visualization of patch-selection maps shows that PatchGen places greater emphasis on
object-relevant regions while reducing attention to surrounding context.
These qualitative results support the decision relevance of the learned
patch ranking, but do not validate the oracle structural assumption or
identify $C^\star(X)$.
\textbf{E. Computational efficiency.}
Relative to the matched LP+LN baseline with DINOv2-ViT-S/14 on a single
NVIDIA RTX 3090 GPU, PatchGen introduces approximately 4\% additional
trainable parameters and increases the per-epoch training time by roughly
8\% under the same batch size. The peak GPU memory consumption is comparable
to the baseline. See the efficiency comparison in Figure~\ref{fig:patch_diagnostics}~E.

Taken together, these diagnostics provide empirical evidence that the learned soft predictive-subset mask yields a decision-relevant patch ranking for the trained classifier. 

\section{Discussion and Conclusion}

\textbf{Conclusion.}
This work theoretically motivates an oracle \blue{intra-image predictive subset}
and empirically evaluates learning a \blue{soft predictive-subset proxy} for visual
generalization. Empirical results show that PatchGen yields competitive
or improved performance relative to matched-backbone baselines across
data-shift, target-shift, and all-shift benchmarks \blue{in the evaluated settings} while producing sample-dependent
patch-selection maps and decision-relevance diagnostics.

\textbf{Limitations.}
Because the oracle \blue{intra-image predictive subset} and its mask are
unobserved, neither the training objectives nor the diagnostic experiments
guarantee that the learned \blue{soft predictive-subset mask} recovers
$\mathbf m_C^\star(X)$. For vision-language backbones, image--text
pretraining may already emphasize class-semantic regions, potentially
leaving less room for additional patch selection. 

\bibliography{main}

@String(ECCV  = {Eur. Conf. Comput. Vis.})

@String(AAAI  = {AAAI})

@String(ICME  = {Int. Conf. Multimedia and Expo})

@String(ECCV  = {ECCV})

@String(ICME  =	{ICME})

@article{yuan2023domain,
  title={Domain-specific bias filtering for single labeled domain generalization},
  author={Yuan, Junkun and Ma, Xu and Chen, Defang and Kuang, Kun and Wu, Fei and Lin, Lanfen},
  journal={International Journal of Computer Vision},
  volume={131},
  number={2},
  pages={552--571},
  year={2023},
  publisher={Springer}
}

@article{tan2024interpret,
  title={Interpret your decision: Logical reasoning regularization for generalization in visual classification},
  author={Tan, Zhaorui and Yang, Xi and Wang, Qiufeng and Nguyen, Anh and Huang, Kaizhu},
  journal={Advances in Neural Information Processing Systems},
  volume={37},
  pages={18166--18204},
  year={2024}
}

@article{abbe2024generalization,
  title={Generalization on the unseen, logic reasoning and degree curriculum},
  author={Abbe, Emmanuel and Bengio, Samy and Lotfi, Aryo and Rizk, Kevin},
  journal={Journal of Machine Learning Research},
  volume={25},
  number={331},
  pages={1--58},
  year={2024}
}

@inproceedings{tan2024rethinking,
  title={Rethinking multi-domain generalization with a general learning objective},
  author={Tan, Zhaorui and Yang, Xi and Huang, Kaizhu},
  booktitle={proceedings of the IEEE/CVF conference on computer vision and pattern recognition},
  pages={23512--23522},
  year={2024}
}

@inproceedings{chiaroni2023parametric,
  title={Parametric information maximization for generalized category discovery},
  author={Chiaroni, Florent and Dolz, Jose and Masud, Ziko Imtiaz and Mitiche, Amar and Ben Ayed, Ismail},
  booktitle={Proceedings of the IEEE/CVF international conference on computer vision},
  pages={1729--1739},
  year={2023}
}

@inproceedings{vaze2022generalized,
  title={Generalized category discovery},
  author={Vaze, Sagar and Han, Kai and Vedaldi, Andrea and Zisserman, Andrew},
  booktitle={Proceedings of the IEEE/CVF conference on computer vision and pattern recognition},
  pages={7492--7501},
  year={2022}
}

@techreport{wah2011caltech,
  title={The caltech-ucsd birds-200-2011 dataset},
  author={Wah, Catherine and Branson, Steve and Welinder, Peter and Perona, Pietro and Belongie, Serge and others},
  year={2011},
  institution={Technical Report CNS-TR-2011-001, California Institute of Technology}
}

@inproceedings{cha2022miro,
  title={Domain generalization by mutual-information regularization with pre-trained models},
  author={Cha, Junbum and Lee, Kyungjae and Park, Sungrae and Chun, Sanghyuk},
  booktitle={European conference on computer vision},
  pages={440--457},
  year={2022},
  organization={Springer}
}

@inproceedings{li2018domain,
  title={Domain generalization via conditional invariant representations},
  author={Li, Ya and Gong, Mingming and Tian, Xinmei and Liu, Tongliang and Tao, Dacheng},
  booktitle={Proceedings of the AAAI conference on artificial intelligence},
  volume={32},
  number={1},
  year={2018}
}

@article{gulrajani2020search,
  title={In search of lost domain generalization},
  author={Gulrajani, Ishaan and Lopez-Paz, David},
  journal={arXiv preprint arXiv:2007.01434},
  year={2020}
}

@inproceedings{hu2020domain,
  title={Domain generalization via multidomain discriminant analysis},
  author={Hu, Shoubo and Zhang, Kun and Chen, Zhitang and Chan, Laiwan},
  booktitle={Uncertainty in artificial intelligence},
  pages={292--302},
  year={2020},
  organization={PMLR}
}

@article{ganin2016domain,
  title={Domain-adversarial training of neural networks},
  author={Ganin, Yaroslav and Ustinova, Evgeniya and Ajakan, Hana and Germain, Pascal and Larochelle, Hugo and Laviolette, Fran{\c{c}}ois and March, Mario and Lempitsky, Victor},
  journal={Journal of machine learning research},
  volume={17},
  number={59},
  pages={1--35},
  year={2016}
}

@inproceedings{sun2016deep,
  title={Deep coral: Correlation alignment for deep domain adaptation},
  author={Sun, Baochen and Saenko, Kate},
  booktitle={European conference on computer vision},
  pages={443--450},
  year={2016},
  organization={Springer}
}

@inproceedings{li2018deep,
  title={Deep domain generalization via conditional invariant adversarial networks},
  author={Li, Ya and Tian, Xinmei and Gong, Mingming and Liu, Yajing and Liu, Tongliang and Zhang, Kun and Tao, Dacheng},
  booktitle={Proceedings of the European conference on computer vision (ECCV)},
  pages={624--639},
  year={2018}
}

@article{bui2021exploiting,
  title={Exploiting domain-specific features to enhance domain generalization},
  author={Bui, Manh-Ha and Tran, Toan and Tran, Anh and Phung, Dinh},
  journal={Advances in Neural Information Processing Systems},
  volume={34},
  pages={21189--21201},
  year={2021}
}

@inproceedings{li2017deeper,
  title={Deeper, broader and artier domain generalization},
  author={Li, Da and Yang, Yongxin and Song, Yi-Zhe and Hospedales, Timothy M},
  booktitle={Proceedings of the IEEE international conference on computer vision},
  pages={5542--5550},
  year={2017}
}

@inproceedings{fang2013unbiased,
  title={Unbiased metric learning: On the utilization of multiple datasets and web images for softening bias},
  author={Fang, Chen and Xu, Ye and Rockmore, Daniel N},
  booktitle={Proceedings of the IEEE international conference on computer vision},
  pages={1657--1664},
  year={2013}
}

@inproceedings{venkateswara2017deep,
  title={Deep hashing network for unsupervised domain adaptation},
  author={Venkateswara, Hemanth and Eusebio, Jose and Chakraborty, Shayok and Panchanathan, Sethuraman},
  booktitle={Proceedings of the IEEE conference on computer vision and pattern recognition},
  pages={5018--5027},
  year={2017}
}

@inproceedings{beery2018recognition,
  title={Recognition in terra incognita},
  author={Beery, Sara and Van Horn, Grant and Perona, Pietro},
  booktitle={Proceedings of the European conference on computer vision (ECCV)},
  pages={456--473},
  year={2018}
}

@inproceedings{peng2019moment,
  title={Moment matching for multi-source domain adaptation},
  author={Peng, Xingchao and Bai, Qinxun and Xia, Xide and Huang, Zijun and Saenko, Kate and Wang, Bo},
  booktitle={Proceedings of the IEEE/CVF international conference on computer vision},
  pages={1406--1415},
  year={2019}
}

@article{cha2021swad,
  title={Swad: Domain generalization by seeking flat minima},
  author={Cha, Junbum and Chun, Sanghyuk and Lee, Kyungjae and Cho, Han-Cheol and Park, Seunghyun and Lee, Yunsung and Park, Sungrae},
  journal={Advances in Neural Information Processing Systems},
  volume={34},
  pages={22405--22418},
  year={2021}
}

@article{arjovsky2019invariant,
  title={Invariant risk minimization},
  author={Arjovsky, Martin and Bottou, L{\'e}on and Gulrajani, Ishaan and Lopez-Paz, David},
  journal={arXiv preprint arXiv:1907.02893},
  year={2019}
}

@article{vapnik1999overview,
  title={An overview of statistical learning theory},
  author={Vapnik, Vladimir N},
  journal={IEEE transactions on neural networks},
  volume={10},
  number={5},
  pages={988--999},
  year={1999},
  publisher={IEEE}
}

@inproceedings{kim2021selfreg,
  title={Selfreg: Self-supervised contrastive regularization for domain generalization},
  author={Kim, Daehee and Yoo, Youngjun and Park, Seunghyun and Kim, Jinkyu and Lee, Jaekoo},
  booktitle={Proceedings of the IEEE/CVF international conference on computer vision},
  pages={9619--9628},
  year={2021}
}

@inproceedings{nam2021reducing,
  title={Reducing domain gap by reducing style bias},
  author={Nam, Hyeonseob and Lee, HyunJae and Park, Jongchan and Yoon, Wonjun and Yoo, Donggeun},
  booktitle={Proceedings of the IEEE/CVF conference on computer vision and pattern recognition},
  pages={8690--8699},
  year={2021}
}

@article{shi2021gradient,
  title={Gradient matching for domain generalization},
  author={Shi, Yuge and Seely, Jeffrey and Torr, Philip HS and Siddharth, Narayanaswamy and Hannun, Awni and Usunier, Nicolas and Synnaeve, Gabriel},
  journal={arXiv preprint arXiv:2104.09937},
  year={2021}
}

@inproceedings{li2018learning,
  title={Learning to generalize: Meta-learning for domain generalization},
  author={Li, Da and Yang, Yongxin and Song, Yi-Zhe and Hospedales, Timothy},
  booktitle={Proceedings of the AAAI conference on artificial intelligence},
  volume={32},
  number={1},
  year={2018}
}

@article{blanchard2021domain,
  title={Domain generalization by marginal transfer learning},
  author={Blanchard, Gilles and Deshmukh, Aniket Anand and Dogan, Urun and Lee, Gyemin and Scott, Clayton},
  journal={Journal of machine learning research},
  volume={22},
  number={2},
  pages={1--55},
  year={2021}
}

@inproceedings{huang2020self,
  title={Self-challenging improves cross-domain generalization},
  author={Huang, Zeyi and Wang, Haohan and Xing, Eric P and Huang, Dong},
  booktitle={European conference on computer vision},
  pages={124--140},
  year={2020},
  organization={Springer}
}

@inproceedings{krueger2021out,
  title={Out-of-distribution generalization via risk extrapolation (rex)},
  author={Krueger, David and Caballero, Ethan and Jacobsen, Joern-Henrik and Zhang, Amy and Binas, Jonathan and Zhang, Dinghuai and Le Priol, Remi and Courville, Aaron},
  booktitle={International conference on machine learning},
  pages={5815--5826},
  year={2021},
  organization={PMLR}
}

@article{zhang2021adaptive,
  title={Adaptive risk minimization: Learning to adapt to domain shift},
  author={Zhang, Marvin and Marklund, Henrik and Dhawan, Nikita and Gupta, Abhishek and Levine, Sergey and Finn, Chelsea},
  journal={Advances in neural information processing systems},
  volume={34},
  pages={23664--23678},
  year={2021}
}

@article{sagawa2019distributionally,
  title={Distributionally robust neural networks for group shifts: On the importance of regularization for worst-case generalization},
  author={Sagawa, Shiori and Koh, Pang Wei and Hashimoto, Tatsunori B and Liang, Percy},
  journal={arXiv preprint arXiv:1911.08731},
  year={2019}
}

@article{zhou2021domain,
  title={Domain generalization with mixstyle},
  author={Zhou, Kaiyang and Yang, Yongxin and Qiao, Yu and Xiang, Tao},
  journal={arXiv preprint arXiv:2104.02008},
  year={2021}
}

@inproceedings{li2018domainMMD,
  title={Domain generalization with adversarial feature learning},
  author={Li, Haoliang and Pan, Sinno Jialin and Wang, Shiqi and Kot, Alex C},
  booktitle={Proceedings of the IEEE conference on computer vision and pattern recognition},
  pages={5400--5409},
  year={2018}
}

@inproceedings{kukleva2021generalized,
  title={Generalized and incremental few-shot learning by explicit learning and calibration without forgetting},
  author={Kukleva, Anna and Kuehne, Hilde and Schiele, Bernt},
  booktitle={Proceedings of the IEEE/CVF International Conference on Computer Vision},
  pages={9020--9029},
  year={2021}
}

@article{liu2021isometric,
  title={Isometric propagation network for generalized zero-shot learning},
  author={Liu, Lu and Zhou, Tianyi and Long, Guodong and Jiang, Jing and Dong, Xuanyi and Zhang, Chengqi},
  journal={arXiv preprint arXiv:2102.02038},
  year={2021}
}

@inproceedings{cendra2024promptccd,
  title={Promptccd: Learning gaussian mixture prompt pool for continual category discovery},
  author={Cendra, Fernando Julio and Zhao, Bingchen and Han, Kai},
  booktitle={European conference on computer vision},
  pages={188--205},
  year={2024},
  organization={Springer}
}

@article{zhang2022grow,
  title={Grow and merge: A unified framework for continuous categories discovery},
  author={Zhang, Xinwei and Jiang, Jianwen and Feng, Yutong and Wu, Zhi-Fan and Zhao, Xibin and Wan, Hai and Tang, Mingqian and Jin, Rong and Gao, Yue},
  journal={Advances in Neural Information Processing Systems},
  volume={35},
  pages={27455--27468},
  year={2022}
}

@inproceedings{kim2023proxy,
  title={Proxy anchor-based unsupervised learning for continuous generalized category discovery},
  author={Kim, Hyungmin and Suh, Sungho and Kim, Daehwan and Jeong, Daun and Cho, Hansang and Kim, Junmo},
  booktitle={Proceedings of the IEEE/CVF international conference on computer vision},
  pages={16688--16697},
  year={2023}
}

@inproceedings{wu2023metagcd,
  title={Metagcd: Learning to continually learn in generalized category discovery},
  author={Wu, Yanan and Chi, Zhixiang and Wang, Yang and Feng, Songhe},
  booktitle={Proceedings of the IEEE/CVF International Conference on Computer Vision},
  pages={1655--1665},
  year={2023}
}

@article{oquab2023DINOv2,
  title={Dinov2: Learning robust visual features without supervision},
  author={Oquab, Maxime and Darcet, Timoth{\'e}e and Moutakanni, Th{\'e}o and Vo, Huy and Szafraniec, Marc and Khalidov, Vasil and Fernandez, Pierre and Haziza, Daniel and Massa, Francisco and El-Nouby, Alaaeldin and others},
  journal={arXiv preprint arXiv:2304.07193},
  year={2023}
}

@inproceedings{he2022masked,
  title={Masked autoencoders are scalable vision learners},
  author={He, Kaiming and Chen, Xinlei and Xie, Saining and Li, Yanghao and Doll{\'a}r, Piotr and Girshick, Ross},
  booktitle={Proceedings of the IEEE/CVF conference on computer vision and pattern recognition},
  pages={16000--16009},
  year={2022}
}

@article{zamanitajeddin2024benchmarking,
  title={Benchmarking domain generalization algorithms in computational pathology},
  author={Zamanitajeddin, Neda and Jahanifar, Mostafa and Xu, Kesi and Siraj, Fouzia and Rajpoot, Nasir},
  journal={arXiv preprint arXiv:2409.17063},
  year={2024}
}

@article{kather2016multi,
  title={Multi-class texture analysis in colorectal cancer histology},
  author={Kather, Jakob Nikolas and Weis, Cleo-Aron and Bianconi, Francesco and Melchers, Susanne M and Schad, Lothar R and Gaiser, Timo and Marx, Alexander and Z{\"o}llner, Frank Gerrit},
  journal={Scientific reports},
  volume={6},
  number={1},
  pages={27988},
  year={2016},
  publisher={Nature Publishing Group UK London}
}

@article{kather2019predicting,
  title={Predicting survival from colorectal cancer histology slides using deep learning: A retrospective multicenter study},
  author={Kather, Jakob Nikolas and Krisam, Johannes and Charoentong, Pornpimol and Luedde, Tom and Herpel, Esther and Weis, Cleo-Aron and Gaiser, Timo and Marx, Alexander and Valous, Nektarios A and Ferber, Dyke and others},
  journal={PLoS medicine},
  volume={16},
  number={1},
  pages={e1002730},
  year={2019},
  publisher={Public Library of Science San Francisco, CA USA}
}

@article{javed2020cellular,
  title={Cellular community detection for tissue phenotyping in colorectal cancer histology images},
  author={Javed, Sajid and Mahmood, Arif and Fraz, Muhammad Moazam and Koohbanani, Navid Alemi and Benes, Ksenija and Tsang, Yee-Wah and Hewitt, Katherine and Epstein, David and Snead, David and Rajpoot, Nasir},
  journal={Medical image analysis},
  volume={63},
  pages={101696},
  year={2020},
  publisher={Elsevier}
}

@inproceedings{radford2021learning,
  title={Learning transferable visual models from natural language supervision},
  author={Radford, Alec and Kim, Jong Wook and Hallacy, Chris and Ramesh, Aditya and Goh, Gabriel and Agarwal, Sandhini and Sastry, Girish and Askell, Amanda and Mishkin, Pamela and Clark, Jack and others},
  booktitle={International conference on machine learning},
  pages={8748--8763},
  year={2021},
  organization={PmLR}
}

@article{krizhevsky2009learning,
  title={Learning multiple layers of features from tiny images},
  author={Krizhevsky, Alex and Hinton, Geoffrey and others},
  year={2009},
  publisher={Toronto, ON, Canada}
}

@article{
  lew2023gestur,
  title={Gradient estimation for unseen domain risk minimization with pre-trained models},
  author={Lew, Byounggyu and Son, Donghyun and Chang, Buru},
  booktitle={Proceedings of the IEEE/CVF International Conference on Computer Vision},
  pages={4436--4446},
  year={2023}
}

@article{arpit2022ensemble,
  title={Ensemble of averages: Improving model selection and boosting performance in domain generalization},
  author={Arpit, Devansh and Wang, Huan and Zhou, Yingbo and Xiong, Caiming},
  journal={Advances in Neural Information Processing Systems},
  volume={35},
  pages={8265--8277},
  year={2022}
}

@article{niu2022domain,
  title={Domain-unified prompt representations for source-free domain generalization},
  author={Niu, Hongjing and Li, Hanting and Zhao, Feng and Li, Bin},
  journal={arXiv preprint arXiv:2209.14926},
  year={2022}
}

@article{zhou2022learning,
  title={Learning to prompt for vision-language models},
  author={Zhou, Kaiyang and Yang, Jingkang and Loy, Chen Change and Liu, Ziwei},
  journal={International journal of computer vision},
  volume={130},
  number={9},
  pages={2337--2348},
  year={2022},
  publisher={Springer}
}

@article{zhang2023domain,
  title={Domain prompt learning for efficiently adapting clip to unseen domains},
  author={Zhang, Xin and Gu, Shixiang Shane and Matsuo, Yutaka and Iwasawa, Yusuke},
  journal={Transactions of the Japanese Society for Artificial Intelligence},
  volume={38},
  number={6},
  pages={B--MC2\_1},
  year={2023},
  publisher={The Japanese Society for Artificial Intelligence}
}

@inproceedings{shu2023clipood,
  title={Clipood: Generalizing clip to out-of-distributions},
  author={Shu, Yang and Guo, Xingzhuo and Wu, Jialong and Wang, Ximei and Wang, Jianmin and Long, Mingsheng},
  booktitle={International conference on machine learning},
  pages={31716--31731},
  year={2023},
  organization={PMLR}
}

@inproceedings{cho2023promptstyler,
  title={Promptstyler: Prompt-driven style generation for source-free domain generalization},
  author={Cho, Junhyeong and Nam, Gilhyun and Kim, Sungyeon and Yang, Hunmin and Kwak, Suha},
  booktitle={Proceedings of the IEEE/CVF International Conference on Computer Vision},
  pages={15702--15712},
  year={2023}
}

@inproceedings{lee2025domain,
  title={Domain generalization using large pretrained models with mixture-of-adapters},
  author={Lee, Gyuseong and Jang, Wooseok and Kim, Jinhyeon and Jung, Jaewoo and Kim, Seungryong},
  booktitle={2025 IEEE/CVF Winter Conference on Applications of Computer Vision (WACV)},
  pages={8259--8269},
  year={2025},
  organization={IEEE}
}

@inproceedings{cheng2024disentangled,
  title={Disentangled prompt representation for domain generalization},
  author={Cheng, De and Xu, Zhipeng and Jiang, Xinyang and Wang, Nannan and Li, Dongsheng and Gao, Xinbo},
  booktitle={Proceedings of the IEEE/CVF Conference on Computer Vision and Pattern Recognition},
  pages={23595--23604},
  year={2024}
}

@article{yu2024clipceil,
  title={Clipceil: Domain generalization through clip via channel refinement and image-text alignment},
  author={Yu, Xi and Yoo, Shinjae and Lin, Yuewei},
  journal={Advances in Neural Information Processing Systems},
  volume={37},
  pages={4267--4294},
  year={2024}
}

@article{gong2025causal,
  title={A Causal Intervention Method for Domain Generalization with a Self-Supervised Auxiliary Task},
  author={Gong, Qinkang and Pan, Yan and Lai, Hanjiang and Yin, Jian},
  journal={International Journal of Computer Vision},
  volume={133},
  number={10},
  pages={7110--7127},
  year={2025},
  publisher={Springer}
}

@inproceedings{liu2025caurdg,
  title={CauRDG: Enhancing Domain Generalization with Causal-Driven Semantic Consistency Reasoning},
  author={Liu, Zongxin and Liu, Yishu and Lu, Guangming and Luo, Xiaoling and Chen, Bingzhi},
  booktitle={Proceedings of the 33rd ACM International Conference on Multimedia},
  pages={11101--11110},
  year={2025}
}

@inproceedings{zhang2024mix,
  title={Mix background and foreground separately: Transformer-based Augmentation Strategies for Domain Generalization},
  author={Zhang, Zhongqiang and Cai, Fuhan and Liu, Duo and Liu, Ge and Fang, Xiangzhong},
  booktitle={2024 IEEE International Conference on Multimedia and Expo (ICME)},
  pages={1--6},
  year={2024},
  organization={IEEE}
}

@article{wang2025exploring,
  title={Exploring invariance matters for domain generalization},
  author={Wang, Shanshan and He, Houmeng and Yang, Xun and Liu, Zhipu and Zhong, Yuanhong and Zhang, Xingyi and Wang, Meng},
  journal={IEEE Transactions on Image Processing},
  year={2025},
  publisher={IEEE}
}

@article{tan2025exploiting,
  title={Exploiting Layer Normalization Fine-tuning in Visual Transformer Foundation Models for Classification},
  author={Tan, Zhaorui and Pan, Tan and Huang, Kaizhu and Yu, Weimiao and Yao, Kai and Jiang, Chen and Wang, Qiufeng and Nguyen, Anh and Guo, Xin and Cheng, Yuan and others},
  journal={arXiv preprint arXiv:2508.07577},
  year={2025}
}

@inproceedings{de2023effectiveness,
  title={On the effectiveness of layernorm tuning for continual learning in vision transformers},
  author={De Min, Thomas and Mancini, Massimiliano and Alahari, Karteek and Alameda-Pineda, Xavier and Ricci, Elisa},
  booktitle={2023 IEEE/CVF International Conference on Computer Vision Workshops (ICCVW)},
  pages={3577--3586},
  year={2023},
  organization={IEEE}
}

@article{wang2024causal,
  title={Causal image modeling for efficient visual understanding},
  author={Wang, Feng and Yang, Timing and Yu, Yaodong and Ren, Sucheng and Wei, Guoyizhe and Wang, Angtian and Shao, Wei and Zhou, Yuyin and Yuille, Alan and Xie, Cihang},
  year={2024}
}

@article{rao2021dynamicvit,
  title={Dynamicvit: Efficient vision transformers with dynamic token sparsification},
  author={Rao, Yongming and Zhao, Wenliang and Liu, Benlin and Lu, Jiwen and Zhou, Jie and Hsieh, Cho-Jui},
  journal={Advances in neural information processing systems},
  volume={34},
  pages={13937--13949},
  year={2021}
}

@inproceedings{abnar2020quantifying,
  title={Quantifying attention flow in transformers},
  author={Abnar, Samira and Zuidema, Willem},
  booktitle={Proceedings of the 58th annual meeting of the association for computational linguistics},
  pages={4190--4197},
  year={2020}
}

\appendix

\section{Details of different generalization tasks}
\label{app:task_details}
To provide a comprehensive evaluation of generalization under diverse and challenging conditions, we conduct experiments on the following tasks to validate our method:
\begin{itemize}
    \item 
    \textbf{Multi-domain generalization (mDG)~\cite{yuan2023domain,li2018domain,hu2020domain,cha2022miro,tan2024rethinking}.} 
    The mDG task aims to learn a classifier from labeled data collected across multiple \emph{seen domains}, such that it generalizes to \emph{unseen domains} whose data distributions are inaccessible during training.
    The central challenge lies in handling severe domain shifts while preserving class-discriminative semantics.
    \item 
    \textbf{Continual category discovery (CCD)~\cite{cendra2024promptccd}.} 
   In CCD, the model is first trained on labeled data in an initial session, followed by a sequence of unlabeled data streams in subsequent sessions.
   The goal is to progressively discover novel categories while retaining performance on previously learned classes.
    \item 
    \textbf{Multi-domain generalization with generalized category discovery (mDG+GCD)~\cite{tan2024interpret}.} 
   The mDG+GCD task extends mDG by additionally accounting for unknown categories, i.e., Generalized category discovery (GCD).
    During training, the model has access to labeled and unlabeled samples from multiple seen domains, and is required to classify samples from known classes while clustering those from unknown classes.
    At test time, the model is evaluated on unseen domains, where it must simultaneously generalize across domains and handle open-set category shifts without access to target-domain data.
\end{itemize}

\section{Full related work}
\label{app:full_related_work}

\textbf{Data-shift scenario. Multi-domain generalization (mDG)} addresses learning models that generalize to unseen domains with potentially large domain shifts
, assuming no target-domain data is available during training, unlike domain adaptation.
Existing mDG methods for image classification mainly learn domain-invariant representations across multiple source domains~\cite{yuan2023domain}.
\blue{The original DANN~\cite{ganin2016domain} was proposed for domain adaptation with unlabeled target-domain data; multi-source adversarial variants adapt this idea to mDG by aligning source-domain feature distributions.}
Later methods, including CDANN~\cite{li2018deep}, CIDG~\cite{li2018domain}, and MDA~\cite{hu2020domain}, incorporate class labels to learn conditionally invariant features.
More recent approaches such as MIRO~\cite{cha2022miro} and GMDG~\cite{tan2024rethinking} exploit large-scale pre-trained models and representation regularization to further improve generalization under severe domain shifts.

\textbf{Target-shift scenario. Continual category discovery (CCD)} combines generalized category discovery (GCD) with continual learning (CL) to incrementally discover new categories over time while mitigating catastrophic forgetting~\cite{zhang2022grow,kim2023proxy,wu2023metagcd,cendra2024promptccd}.
Unlike standard GCD, CCD must handle sequential data streams without knowing the total number of categories in advance.
PromptCCD~\cite{cendra2024promptccd} addresses this using dynamic prompt-based representation updates, enabling concurrent novel category discovery and knowledge retention.
By decoupling category expansion from fixed classifier architectures, it avoids reliance on pre-defined category counts and performs well in long-term continual discovery.

\textbf{All-shift scenario. Multi-domain generalization and generalized category discovery (mDG+GCD)}, a more challenging setting, was introduced by~\cite{tan2024interpret}.
Here, models are trained on partially labeled data from multiple seen domains and evaluated on unseen domains, where they must both classify samples from known categories and discover and cluster samples from unknown categories.

\textbf{Causal-based multi-domain generalization methods.}
To isolate invariant mechanisms from domain-specific spurious factors, recent causal DG approaches employ prototype-guided causal disentanglement (CauRDG~\cite{liu2025caurdg}), latent confounders and causal intervention (SMIDG~\cite{wang2025exploring}), and structural causal alignment across domains (CI-DGA~\cite{gong2025causal}).
However, these methods focus on domain-level causality but ignore fine-grained, sample-specific structure.
Mix-based methods~\cite{zhang2024mix} mitigate spurious background correlations by separating foreground and background, but rely on clear object–context boundaries, which are often absent in histopathological images.
{Causal image modeling and explanation methods} also study how visual factors affect model behavior~\cite{wang2024causal}.
{Unlike causal identification, we do not estimate causal variables or effects. Instead, we use predictive sufficiency to structurally guide learning decision-relevant patch masks that improve classification generalization.}

\textbf{Improving generalization across diverse scenarios.}
A notable study, L-Reg~\cite{tan2024interpret}, adopts a logic-based framework to enhance generalization under various settings.
In contrast, PatchGen improves pretrained visual representations through
sample-dependent patch selection.

{
\textbf{Token pruning and attention-based patch selection.}
Different from token pruning approaches such as DynamicViT~\cite{rao2021dynamicvit} for improving efficiency or post-hoc attention visualization methods~\cite{abnar2020quantifying} for interpretability, PatchGen jointly optimizes attention-derived masks using
generalization-oriented objectives.
}

\section{Theoretical Details}
\label{app:proof_details}

This appendix separates three levels of analysis.
First, we study the statistical benefit of a sample-adaptive oracle intra-image predictive
subset.
Second, we quantify how learned-mask approximation affects robustness to shifts in complementary
context.
Third, we analyze preservation of class separation under target shift.
The results do not claim exact recovery of the oracle predictive-subset mask or bound the
capacity of the complete learned selector.

\subsection{Interpretation of the structural assumptions}
\label{sec:assumption_interpretation}

Assumption~3.1 prevents the oracle mask from
introducing side information unavailable in $\Phi_{\mathrm{all}}$.
Under this assumption, $\Phi_C$ and $\Phi_R$ are deterministic functions of
$\Phi_{\mathrm{all}}$, while
$\Phi_{\mathrm{all}}=\Phi_C+\Phi_R$.

Together with Assumption~3.2, this implies
\[
I_d(Y;\Phi_R\mid\Phi_C)=0,
\]
although $I_d(Y;\Phi_R)$ may remain positive. Thus, complementary context
may be label-correlated, domain-varying, or domain-stable, while providing
no additional label information once $\Phi_C$ is given.

The assumptions do not require $C^\star(X)$ to be unique or minimal.
Moreover, the theoretical analysis concerns idealized masked
concatenations and does not bound the capacity of the complete learned
PatchGen architecture.

\subsection{Stability of masked representations}
\label{sec:mask_stability}

\begin{lemma}[Stability of masked concatenations]
\label{lem:mask_stability}
Assume $\|z_p(X)\|_2\le B$ for every patch, and let $G$ be a fixed
non-expansive map. Then, for any
$\mathbf m,\mathbf m'\in[0,1]^P$,
\begin{align}
\left\|
G\!\left(\Phi_{\mathbf m}(X)\right)
-
G\!\left(\Phi_{\mathbf m'}(X)\right)
\right\|_2 \nonumber \\
\le
B\|\mathbf m-\mathbf m'\|_2
\le
B\|\mathbf m-\mathbf m'\|_1.
\label{eq:mask_representation_stability}
\end{align}
\end{lemma}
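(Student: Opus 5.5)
The argument goes in three steps: first use non-expansiveness of $G$, then compute the difference of the two masked concatenations blockwise, and finally compare norms on $\mathbb R^P$.

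\emph{Step 1 (remove $G$).} Because $G$ is non-expansive,
\[
\|G(\Phi_{\mathbf m}(X))-G(\Phi_{\mathbf m'}(X))\|_2\le\|\Phi_{\mathbf m}(X)-\Phi_{\mathbf m'}(X)\|_2 .
\]
It therefore suffices to bound the distance between the masked concatenations themselves.

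\emph{Step 2 (blockwise computation).} The map $\mathbf m\mapsto\Phi_{\mathbf m}(X)$ is linear in $\mathbf m$. Hence the difference is the concatenation of the blocks $(m_p-m'_p)z_p(X)$, for $p\in[P]$. The squared Euclidean norm of a vectorized concatenation is the sum of the squared block norms. This gives
\[
\|\Phi_{\mathbf m}-\Phi_{\mathbf m'}\|_2^2=\sum_{p=1}^P (m_p-m'_p)^2\|z_p(X)\|_2^2\le B^2\sum_{p=1}^P(m_p-m'_p)^2 ,
\]
using the bound $\|z_p(X)\|_2\le B$. Taking square roots yields the first inequality, $B\|\mathbf m-\mathbf m'\|_2$.

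\emph{Step 3 ($\ell_2$ versus $\ell_1$).} The second inequality is the standard fact $\|v\|_2\le\|v\|_1$ on $\mathbb R^P$. It follows from
\[
\|v\|_1^2=\sum_p|v_p|^2+\sum_{p\ne q}|v_p||v_q|\ge\|v\|_2^2 .
\]

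There is no real obstacle here. The only point needing care is Step 2, where one must check that the norm of the vectorized concatenation decomposes exactly over patch blocks. This holds because $\operatorname{vec}$ simply stacks the $P$ blocks of dimension $d_z$. The constraint $\mathbf m,\mathbf m'\in[0,1]^P$ is not actually needed for the inequality; it only fixes the domain on which masks live.
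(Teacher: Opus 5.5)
Your proposal is correct and follows the paper's proof essentially step for step: non-expansiveness of $G$, the blockwise identity $\sum_p (m_p-m_p')^2\|z_p(X)\|_2^2\le B^2\|\mathbf m-\mathbf m'\|_2^2$, and then $\|\cdot\|_2\le\|\cdot\|_1$. Your remark that the constraint $\mathbf m,\mathbf m'\in[0,1]^P$ is not needed for the inequality is also accurate.
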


\begin{proof}
By non-expansiveness of $G$,
\begin{align}
&
\left\|
G\!\left(\Phi_{\mathbf m}(X)\right)
-
G\!\left(\Phi_{\mathbf m'}(X)\right)
\right\|_2
\nonumber\\
&\le
\left\|
\Phi_{\mathbf m}(X)-\Phi_{\mathbf m'}(X)
\right\|_2.
\end{align}
Moreover,
\begin{align}
\left\|
\Phi_{\mathbf m}(X)-\Phi_{\mathbf m'}(X)
\right\|_2^2
&=
\sum_{p=1}^P
(m_p-m_p')^2
\|z_p(X)\|_2^2
\nonumber\\
&\le
B^2
\|\mathbf m-\mathbf m'\|_2^2.
\end{align}
Taking square roots and using
$\|\mathbf a\|_2\le\|\mathbf a\|_1$ completes the proof.
\end{proof}

\subsection{Proof of Paper Proposition~3.1}
\label{sec:oracle_complexity}

Let $\mathcal H_C:=\mathcal H_{\mathbf m_C^\star}$ and
$\mathcal H_{\mathrm{all}}:=\mathcal H_{\mathbf 1_P}$ denote the
hypothesis classes of norm-bounded linear predictors from the main text.
Formally, for a fixed mask function $\mathbf m(X)$,
\begin{equation}
\mathcal H_{\mathbf m}
=
\left\{
X\mapsto
\left\langle
\mathbf v,
\Phi_{\mathbf m}(X)
\right\rangle:
\|\mathbf v\|_2\le\Lambda
\right\}.
\label{eq:masked_hypothesis_class}
\end{equation}
For clarity, we state the argument for scalar scores; the standard vector
contraction argument gives the corresponding multiclass extension.

\begin{proof}
Stable predictive sufficiency gives
\[
\mathbb P(Y\mid\Phi_{\mathrm{all}})
=
\mathbb P(Y\mid\Phi_C).
\]
Hence the conditional label law, and therefore the Bayes decision, depends
only on $\Phi_C$.
The unrestricted Bayes risks based on $\Phi_C$ and $\Phi_{\mathrm{all}}$ are equal.

For the complexity bound, let
\[
u_i
=
\Phi_{\mathbf m_C^\star(X_i)}(X_i).
\]
Since at most $s$ patches are selected and each has norm at most $B$,
\[
\|u_i\|_2
\le
B\sqrt s.
\]
Therefore,
\begin{align}
\widehat{\mathfrak R}_n
\left(
\mathcal H_{\mathbf m_C^\star}
\right)
&=
\mathbb E_\sigma
\left[
\sup_{\|\mathbf v\|_2\le\Lambda}
\frac1n
\sum_{i=1}^n
\sigma_i\langle\mathbf v,u_i\rangle
\right]
\nonumber\\
&=
\frac{\Lambda}{n}
\mathbb E_\sigma
\left\|
\sum_{i=1}^n\sigma_i u_i
\right\|_2
\nonumber\\
&\le
\blue{\frac{\Lambda}{n}
\sqrt{
\mathbb E_\sigma
\left\|
\sum_{i=1}^n\sigma_i u_i
\right\|_2^2
}
=
\frac{\Lambda}{n}
\sqrt{
\sum_{i=1}^n\|u_i\|_2^2
}}
\nonumber\\
&\le
\frac{\Lambda B\sqrt s}{\sqrt n}.
\end{align}
For the all-patch mask, $\|u_i\|_2\le B\sqrt P$, yielding the second bound.
\end{proof}

\paragraph{Note.}
The following corollary extends the result with a standard generalization
bound; it is not stated in the main text.

\begin{corollary}[Oracle-subset generalization bound]
\label{cor:oracle_gen_bound}
Under the same conditions, if $\ell\in[0,1]$ is $\rho$-Lipschitz in the
prediction score, then with probability at least $1-\delta$, every
$h\in\mathcal H_{\mathbf m_C^\star}$ satisfies
\begin{equation}
R(h)
\le
\widehat R(h)
+
\frac{2\rho\Lambda B\sqrt s}{\sqrt n}
+
3\sqrt{\frac{\log(2/\delta)}{2n}}.
\label{eq:oracle_generalization_bound}
\end{equation}
The analogous all-patch bound replaces $s$ by $P$.
\end{corollary}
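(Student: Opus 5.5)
The plan is the standard uniform-convergence argument via Rademacher complexity, applied to the loss class built from $\mathcal H_{\mathbf m_C^\star}$, with the complexity estimate of Proposition~\ref{prop:oracle_generalization} plugged in. Write $\mathcal F=\{(X,Y)\mapsto \ell(h(X),Y): h\in\mathcal H_{\mathbf m_C^\star}\}$. Since $\ell\in[0,1]$, the empirical average $\widehat R(h)$ satisfies the bounded-differences condition with constants $1/n$ for every fixed $h$.

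\textbf{Step 1: uniform deviation bound.} First apply McDiarmid's inequality to $\sup_{f\in\mathcal F}(\mathbb E f-\widehat{\mathbb E}_n f)$. Then use symmetrization to bound its expectation by $2\mathfrak R_n(\mathcal F)$. To pass to the empirical complexity $\widehat{\mathfrak R}_n(\mathcal F)$, apply McDiarmid once more, now to $\widehat{\mathfrak R}_n(\mathcal F)$ itself; this also has bounded differences $1/n$. Splitting the failure probability as $\delta/2+\delta/2$ gives, with probability at least $1-\delta$, simultaneously for all $h$,
\[
R(h)\le \widehat R(h)+2\widehat{\mathfrak R}_n(\mathcal F)+3\sqrt{\log(2/\delta)/(2n)}.
\]
This is the textbook form, e.g.\ Mohri et al., Theorem~3.3, and it accounts exactly for the constant $3$ and the $\log(2/\delta)$ term.

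\textbf{Step 2: contraction.} Since $\ell(\cdot,y)$ is $\rho$-Lipschitz in the scalar score for each $y$, Talagrand's contraction lemma gives $\widehat{\mathfrak R}_n(\mathcal F)\le \rho\,\widehat{\mathfrak R}_n(\mathcal H_{\mathbf m_C^\star})$. The contraction lemma permits functions $\phi_i$ that depend on the sample index, and with $\phi_i=\ell(\cdot,Y_i)$ this dependence on $Y_i$ is handled.

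\textbf{Step 3: plug in.} Proposition~\ref{prop:oracle_generalization} gives $\widehat{\mathfrak R}_n(\mathcal H_{\mathbf m_C^\star})\le \Lambda B\sqrt s/\sqrt n$, which produces the middle term $2\rho\Lambda B\sqrt s/\sqrt n$. For the all-patch class, the same chain with the second bound in \eqref{eq:oracle_complexity_comparison} replaces $s$ by $P$.

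\textbf{Main obstacle.} The only delicate point is Step~1. The oracle mask $\mathbf m_C^\star(X)$ must be a fixed, data-independent function of $X$, so that $\mathcal H_{\mathbf m_C^\star}$ is a fixed class and symmetrization is valid. This is what Assumption~\ref{ass:oracle_measurability} and the oracle (non-learned) nature of the mask provide. I would state explicitly that the bound holds for i.i.d.\ samples from a single domain distribution, and that it does not cover a mask learned from the same sample.
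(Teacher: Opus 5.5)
Your proposal is correct and follows the paper's argument: the standard Rademacher bound for $[0,1]$-valued loss classes supplies the $2\widehat{\mathfrak R}_n$ and $3\sqrt{\log(2/\delta)/(2n)}$ terms, and contraction passes to $\rho\,\widehat{\mathfrak R}_n(\mathcal H_{\mathbf m_C^\star})$. The sample-independent bound $\Lambda B\sqrt s/\sqrt n$ from Proposition~\ref{prop:oracle_generalization} (with $P$ in place of $s$ for the all-patch class) then gives the stated bound. The paper states this only as a two-line sketch; your version simply unpacks the McDiarmid, symmetrization and per-sample contraction steps, and your caveat that the mask must be fixed rather than learned from the sample matches the paper's own scope remarks.
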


\begin{proof}[Proof of Corollary~\ref{cor:oracle_gen_bound}]
The result follows by applying the contraction inequality to the
$\rho$-Lipschitz loss and then the standard Rademacher generalization bound
using the complexity estimate established in Paper
Proposition~3.1.
\end{proof}

\paragraph{Scope.}
\blue{This comparison keeps the predictor class and norm bound fixed, and only
changes the input representation: using the oracle predictive subset
$\Phi_C$ versus using all patches $\Phi_{\mathrm{all}}$.
Under the same predictor capacity, restricting the representation to at
most $s$ oracle predictive patches yields a Rademacher-complexity upper
bound scaling with $\sqrt{s}$, whereas the all-patch representation scales
with $\sqrt{P}$. Therefore, when $s<P$, the oracle-subset representation
admits a strictly tighter complexity upper bound.
This result concerns the ideal oracle representation. 
Our ablations include Task~3.1, 
which uses the same sample-dependent selector as PatchGen but removes the proposed predictive-subset constraints; 
this equal-parameter comparison shows that the constrained selector gives better empirical generalization.}

\subsection{Proof of Paper Proposition~3.2}
\label{sec:data_shift_proof}

For a fixed classifier $h$, define
\[
R_d(h,\phi)
=
\mathbb E_d
\left[
\ell
\left(
h(\widetilde z_\phi^+(X)),Y
\right)
\right],
\]
and define the corresponding oracle risk
\[
R_d(h,C)
=
\mathbb E_d
\left[
\ell
\left(
h(\widetilde z_C(X)),Y
\right)
\right].
\]

\begin{lemma}[Risk deviation from the oracle predictive-subset mask]
\label{lem:oracle_risk_deviation}
Suppose $u\mapsto\ell(h(u),y)$ is $L$-Lipschitz.
Then
\begin{equation}
|R_d(h,\phi)-R_d(h,C)|
\le
LBP\,\epsilon_d(\phi).
\label{eq:oracle_risk_deviation}
\end{equation}
\end{lemma}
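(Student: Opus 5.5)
We bound the risk gap pointwise and then take expectations under $\mathbb P_d$. For a fixed sample $(X,Y)$, the $L$-Lipschitz property of $u\mapsto\ell(h(u),Y)$ gives
\[
\left|\ell\!\left(h(\widetilde z_\phi^+(X)),Y\right)-\ell\!\left(h(\widetilde z_C(X)),Y\right)\right|
\le L\left\|\widetilde z_\phi^+(X)-\widetilde z_C(X)\right\|_2 .
\]
Both representations are $G$ applied to masked concatenations of the same patch features $Z(X)$, with masks $\mathbf m_\phi(X)$ and $\mathbf m_C^\star(X)$ respectively, and both masks lie in $[0,1]^P$. Lemma~\ref{lem:mask_stability}, applied with $\mathbf m=\mathbf m_\phi(X)$ and $\mathbf m'=\mathbf m_C^\star(X)$, then bounds the right-hand side by $LB\|\mathbf m_\phi(X)-\mathbf m_C^\star(X)\|_1$.

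Next we use Jensen's inequality, $|\mathbb E_d[\cdot]|\le\mathbb E_d|\cdot|$, to move the absolute value inside the expectation:
\[
|R_d(h,\phi)-R_d(h,C)|\le LB\,\mathbb E_d\!\left[\|\mathbf m_\phi(X)-\mathbf m_C^\star(X)\|_1\right].
\]
We then multiply and divide by $P$. By the definition of $\epsilon_d(\phi)$, the right-hand side equals $LBP\,\epsilon_d(\phi)$, which is exactly \eqref{eq:oracle_risk_deviation}.

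The only real obstacle is measure-theoretic bookkeeping. We need both risks to be finite and the loss difference to be integrable, so that subtracting the expectations is legitimate. This follows from the pointwise bound, since $\|\mathbf m_\phi-\mathbf m_C^\star\|_1\le P$, which makes the difference bounded by $LBP$. We also need $\mathbf m_C^\star(X)$ and $\mathbf m_\phi(X)$ to be measurable, which Assumption~\ref{ass:oracle_measurability} provides for the oracle mask. No distributional assumption linking domains is needed here; that assumption enters only later, in Proposition~\ref{prop:data_shift}.
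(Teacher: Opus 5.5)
Your proposal is correct and matches the paper's proof: a pointwise Lipschitz bound combined with Lemma~\ref{lem:mask_stability} gives $LB\|\mathbf m_\phi(X)-\mathbf m_C^\star(X)\|_1$, and taking $\mathbb E_d$ (with the absolute value moved inside) and rewriting via the definition of $\epsilon_d(\phi)$ yields $LBP\,\epsilon_d(\phi)$. One small caveat on your bookkeeping remark: the bounded difference gives integrability of the difference, and shows that one risk is finite if and only if the other is, but not finiteness itself. For that you need a bit more, for example that both $\widetilde z_\phi^+(X)$ and $\widetilde z_C(X)$ lie within distance $B\sqrt P$ of $G(\mathbf 0)$ together with a finite label set, or simply assuming the risks are well defined, as the paper implicitly does.
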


\begin{proof}
By Lemma~\ref{lem:mask_stability},
\[
\left\|
\widetilde z_\phi^+(X)-\widetilde z_C(X)
\right\|_2
\le
B
\left\|
\mathbf m_\phi(X)-\mathbf m_C^\star(X)
\right\|_1.
\]
The Lipschitz condition therefore gives
\begin{align}
|R_d(h,\phi)-R_d(h,C)|
&\le
LB\,
\mathbb E_d
\left[
\left\|
\mathbf m_\phi(X)-\mathbf m_C^\star(X)
\right\|_1
\right]
\nonumber\\
&=
LBP\,\epsilon_d(\phi).
\end{align}
This completes the proof.
\end{proof}

\begin{proof}[Proof of Paper Proposition~3.2]
Because $\widetilde z_C$ is a deterministic function of $\Phi_C$ and the
same fixed refinement map is used in both domains,
\[
R_d(h,C)=R_{d'}(h,C).
\]
Using the triangle inequality and
Lemma~\ref{lem:oracle_risk_deviation},
\begin{align}
|R_d(h,\phi)-R_{d'}(h,\phi)|
&\le
|R_d(h,\phi)-R_d(h,C)|
\nonumber\\
&\quad+
|R_d(h,C)-R_{d'}(h,C)|
\nonumber\\
&\quad+
|R_{d'}(h,C)-R_{d'}(h,\phi)|
\nonumber\\
&\le
LBP
\left(
\epsilon_d(\phi)+\epsilon_{d'}(\phi)
\right).
\end{align}
This completes the proof.
\end{proof}

\paragraph{Note.}
This result isolates shifts in complementary context while keeping the
joint distribution of the oracle predictive-subset representation and label
fixed. It does not cover changes in $\mathbb P(\Phi_C,Y)$, label noise, or
changes in the predictive mechanism.

\subsection{Proof of Paper Proposition~3.3}
\label{sec:target_shift_proof}

\begin{proof}
Consider any two samples $X$ and $X'$ with labels
$Y=c$ and $Y'=c'$, where $c\ne c'$.
For a sample $X$ with $Y=c$,
\begin{align}
    &\|\widetilde z_\phi^+(X)-\mu_c\|_2
\le \nonumber \\
&\|\widetilde z_\phi^+(X)-\widetilde z_C(X)\|_2
+
\|\widetilde z_C(X)-\mu_c\|_2
\le
\eta+r.
\end{align}
Likewise,
$\|\widetilde z_\phi^+(X')-\mu_{c'}\|_2\le\eta+r$.
The reverse triangle inequality gives
\begin{align}
&
\left\|
\widetilde z_\phi^+(X)
-
\widetilde z_\phi^+(X')
\right\|_2
\nonumber\\
&\ge
\|\mu_c-\mu_{c'}\|_2
-
\|\widetilde z_\phi^+(X)-\mu_c\|_2
-
\|\widetilde z_\phi^+(X')-\mu_{c'}\|_2
\nonumber\\
&\ge
\gamma-2(r+\eta).
\end{align}
This completes the proof.
\end{proof}

\begin{corollary}[Mask-error condition for separation]
\label{cor:mask_target_separation}
If
$\|\mathbf m_\phi(X)-\mathbf m_C^\star(X)\|_1\le q$
for every sample, then
$\eta\le Bq$ and the learned class-separation margin is at least
\begin{equation}
\gamma-2(r+Bq).
\end{equation}
\end{corollary}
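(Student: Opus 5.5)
The plan is to reduce the claim to Proposition~3.3 by checking that the uniform mask-error hypothesis gives the representation-approximation hypothesis with $\eta=Bq$. The clusterability assumptions (radius $r$, center separation $\gamma$) are inherited unchanged from Proposition~3.3. So the only task is to show that the learned representation stays within distance $Bq$ of the oracle representation for every sample.

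First, fix an arbitrary sample $X$. Apply Lemma~\ref{lem:mask_stability} with $\mathbf m=\mathbf m_\phi(X)$ and $\mathbf m'=\mathbf m_C^\star(X)$. Both masks lie in $[0,1]^P$, since the oracle mask is binary and the learned mask lies in $(0,1)^P$. By the definitions of $\widetilde z_\phi^+$ and $\widetilde z_C$ via the same non-expansive map $G$, the lemma gives
\[
\|\widetilde z_\phi^+(X)-\widetilde z_C(X)\|_2\le B\|\mathbf m_\phi(X)-\mathbf m_C^\star(X)\|_1\le Bq.
\]
The norm bound $\|z_p(X)\|_2\le B$ needed by the lemma comes from Assumption~\ref{ass:compact_support}. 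Since $X$ was arbitrary, the hypothesis of Proposition~\ref{prop:target_shift} holds with $\eta=Bq$; equivalently, the smallest admissible $\eta$ satisfies $\eta\le Bq$.

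Second, invoke Proposition~\ref{prop:target_shift} with this $\eta$. For $X,X'$ with distinct labels $c\ne c'$, we get $\|\widetilde z_\phi^+(X)-\widetilde z_\phi^+(X')\|_2\ge\gamma-2(r+\eta)\ge\gamma-2(r+Bq)$. The separation bound is monotone decreasing in $\eta$, so replacing $\eta$ by its upper bound $Bq$ only weakens the bound.

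There is no substantive obstacle. The only point needing care is that the hypotheses hold pointwise for every sample rather than in expectation, since Proposition~\ref{prop:target_shift} requires a uniform $\eta$. The uniform $\ell_1$ bound $q$ supplies exactly this. As an aside, one could tighten the result to $\gamma-2(r+B q_2)$ using the $\ell_2$ version of Lemma~\ref{lem:mask_stability}, but this is not needed for the stated claim.
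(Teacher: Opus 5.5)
Your proposal is correct and matches the paper's proof: the paper likewise applies Lemma~\ref{lem:mask_stability} samplewise (same non-expansive $G$, patch-norm bound $B$) to get $\eta\le Bq$, then substitutes this into Proposition~\ref{prop:target_shift}. Your remarks on needing a uniform rather than expected mask error, and on monotonicity in $\eta$, spell out what the paper's one-line proof leaves implicit.
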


\begin{proof}
The result follows immediately from
Lemma~\ref{lem:mask_stability} and Paper
Proposition~3.3.
\end{proof}

The expected mask error $\epsilon_d(\phi)$ used in the complementary-context-shift result does not by itself imply the uniform error condition required here.
Target-shift preservation, therefore, relies on an additional, explicitly
stated approximation assumption.

\subsection{Proof of Paper Corollary~3.1}
\label{app:joint_shift_scope}

\begin{proof}
The risk-discrepancy conclusion follows by applying Paper
Proposition~3.2 to the modeled data-shift component,
while the class-separation conclusion follows from
Paper Proposition~3.3 for the target-shift component.
No interaction term between the two components is bounded; hence, the result
is componentwise rather than a unified joint-risk guarantee.
\end{proof}

\section{Additional design and implementation details}
\label{app:overall_design}

\textbf{More details of soft predictive-subset mask estimation.}
Task~3.1 outperforms the channel-only variant in Task~2.2, showing that
sample-dependent spatial patch selection provides additional value beyond
channel-wise refinement.

\textbf{More details of selected-confidence regularization.}
In the implementation, the confidence objective is expressed using a
\blue{detached complementary-confidence reference. For unlabeled samples in CCD and mDG+GCD, $\bar Y_i$ denotes the non-differentiable pseudo-label supplied by the host discovery objective:}
\begin{equation}
\begin{aligned}
\blue{\mathcal L_{neg}}
&\blue{=
\frac1{|\bar{\mathcal B}|}
\sum_{i\in\bar{\mathcal B}}
\left[
\operatorname{sg}\!\left(
p_h(\bar Y_i\mid\widetilde z_{\phi,i}^{-})
\right)
-
p_h(\bar Y_i\mid\widetilde z_{\phi,i}^{+})
\right].}
\end{aligned}
\end{equation}
Because the complementary term is detached,
\[
\begin{aligned}
\blue{\nabla\mathcal L_{neg}}
&\blue{=
\nabla\left[
-\frac1{|\bar{\mathcal B}|}
\sum_{i\in\bar{\mathcal B}}
p_h(\bar Y_i\mid\widetilde z_{\phi,i}^{+})
\right].}
\end{aligned}
\]
Thus, the trainable effect is identical to $\mathcal L_{\mathrm{conf}}$
(defined in Paper Eq.~(5)).
The complementary confidence acts as a detached, sample-dependent reference
that contributes no gradient.
The objective does not directly train the complementary branch to produce
lower confidence.

\section{More experimental details}
Detailed reproducibility settings, including hyperparameters and implementation
configurations, are documented in the
\texttt{reproducibility\_parameters.md} file.

\label{app:more_exp_details}

\subsection{Shared details.}
All PatchGen results are reported with seed $1$, consistent with the
re-implemented baselines.
Table~\ref{tab:analysis} reports results under seeds $\{0,1,2\}$ and varying
$\lambda_{ms}$ and $\lambda_{sim}$ on TerraIncognita in the mDG setting.
Across seeds $\{0,1,2\}$, the standard deviation of average accuracy is below
$0.3$ percentage points. Across the tested loss weights, average accuracy
varies within 
\blue{$1.4$ percentage points without SWAD and $1.6$ percentage points with SWAD when the default setting is included}.
For CCD and mDG+GCD, we follow the same single-seed convention as the
compared baselines~\cite{cendra2024promptccd,tan2024interpret}.
\blue{Please refer to results across seeds in Table~\ref{tab:analysis}.}

\blue{For mDG and mDG+GCD, hyperparameters and checkpoints are selected using source-domain validation splits only; no samples, labels, or statistics from the held-out test domain are used for model selection.
For CCD, we follow the PromptCCD protocol and use only the validation data available up to the current session for checkpoint and hyperparameter selection; future-session test labels are never used.}
{\blue{For CCD and mDG+GCD, PatchGen follows the pseudo-labeling rules of the corresponding host methods, PromptCCD and L-Reg. The pseudo-labels are treated as detached targets for PatchGen-specific losses; analyzing alternative pseudo-label generators is outside the scope of this work.}}
{\blue{We will release the training code, configuration files, and data-split scripts upon publication.}}

The optional original-feature contrast, MIRO oracle regularization, and
domain-invariance regularization available in the codebase are disabled in
all main PatchGen experiments. Their coefficients are set to zero, and these
\blue{terms are not part of the main-paper overall objective.}

\textbf{Patch-selection diagnostic protocol.}
{\blue{For the patch perturbation diagnostics, the learned mask satisfies $\mathbf m_\phi(X)\in[0,1]^P$, and we define the selected set as $\widehat C_\phi(X)=\{p:m_{\phi,p}(X)\ge 0.5\}$ using a fixed mask threshold of $0.5$. The same perturbation operator is then applied to the selected or complementary set for all methods.}}

\subsection{Detailed method list for mDG natural-image experiments}
We compare PatchGen with the following methods.

\textbf{VLM-based methods}:
\blue{SWAD~\cite{cha2021swad} with VLM,}
CLIP~\cite{radford2021learning},
SMA~\cite{arpit2022ensemble},
DUPRG~\cite{niu2022domain},
CoOp~\cite{zhou2022learning},
MIRO~\cite{cha2022miro},
DPL~\cite{zhang2023domain},
CLIPOOD~\cite{shu2023clipood},
Promptstyler~\cite{cho2023promptstyler},
KAdaptation~\cite{lee2025domain},
GESTUR~\cite{lew2023gestur},
DPR~\cite{cheng2024disentangled},
CLIPCEIL++~\cite{yu2024clipceil}.

\textbf{VM-based methods}:
MMD~\cite{li2018domainMMD},
Mixstyle~\cite{zhou2021domain},
GroupDRO~\cite{sagawa2019distributionally},
IRM~\cite{arjovsky2019invariant},
ARM~\cite{zhang2021adaptive},
VREx~\cite{krueger2021out},
CDANN~\cite{li2018deep},
DANN~\cite{ganin2016domain},
RSC~\cite{huang2020self},
MTL~\cite{blanchard2021domain},
MLDG~\cite{li2018learning},
Fish~\cite{shi2021gradient},
ERM~\cite{vapnik1999overview},
SagNet~\cite{nam2021reducing},
SelfReg~\cite{kim2021selfreg},
CORAL~\cite{sun2016deep},
mDSDI~\cite{bui2021exploiting},
{GMDG}~\cite{tan2024rethinking},
{L-Reg}~\cite{tan2024interpret}.

\textbf{Causal-based methods}:
CI-DGA~\cite{gong2025causal},
SMIDG~\cite{wang2025exploring},
BFMix~\cite{zhang2024mix},
CauRDG~\cite{liu2025caurdg}.

\subsection{More details of mDG histopathological image datasets}
\label{app:mdg_histo_datasets_details}
To evaluate whether our method captures diagnosis-relevant visual patterns, we use histopathological image datasets with validated links between visual patterns and diagnostic labels.

\textbf{1) HISTOPANTUM}~\cite{zamanitajeddin2024benchmarking}.
HISTOPANTUM contains histopathology images of normal and tumor tissues from four organs.
Each organ defines a domain, and all domains share the same two classes, yielding 
\blue{fully aligned binary label spaces}.
\blue{This provides a controlled setting with shared label support; any remaining shifts may include organ appearance and possible label-prior differences.}

\textbf{2) HISTOCOLON.}
We construct HISTOCOLON, a new benchmark formed by combining three colon histopathology datasets from different institutions and time periods: CRC-TP~\cite{javed2020cellular}, K-16~\cite{kather2016multi}, and K-19~\cite{kather2019predicting}.
Across these sub-datasets, there are ten semantic classes, but some classes are missing from certain sub-datasets.
HISTOCOLON thus exhibits both domain shifts and partial class misalignment.
Each constituent dataset is treated as a separate domain. In every
leave-one-domain-out fold, one entire dataset is reserved as the unseen test
domain, while the other two datasets are used exclusively as source domains.
Consequently, no image, slide, or patient from the held-out dataset appears
in source-domain training or validation. Model selection is performed using
splits constructed only from the two source datasets. This protocol prevents
patient-level leakage between the source and unseen test domains.

\section{Detailed numeric results of main experiments}

This section presents detailed results of the figures in the paper's Experiments Section.
The following tables preserve the exact numeric values used to generate the main-paper visual summaries. We keep these tables in the appendix to avoid shrinking dense tables in the main text while retaining a complete audit trail from each plotted value to its original result.
Table~\ref{tab:mDG_results} reports the full mDG comparison across VLM-based, VM-based, and our baseline methods.
Table~\ref{tab:mDG_results_causal} compares with causal-inspired DG methods.
Tables~\ref{tab:mdg_HISTOPANTUM} and~\ref{tab:mdg_HISTOCOLON} report histopathology mDG results.
Table~\ref{tab:CCD_res} reports the CCD session-wise results.
Tables~\ref{tab:avg_mdg_gcd_more_results} and~\ref{tab:avg_mdg_gcd_results} give the per-dataset and averaged mDG+GCD results.
Table~\ref{tab:ablation} provides the full ablation numeric values.

Specifically, we provides the full results visualization as Figure~\ref{fig:mDG_results} of Table~\ref{tab:mDG_results} for a clear exhibition.  

\begin{figure}[t!]
\centering
\includegraphics[width=\linewidth]{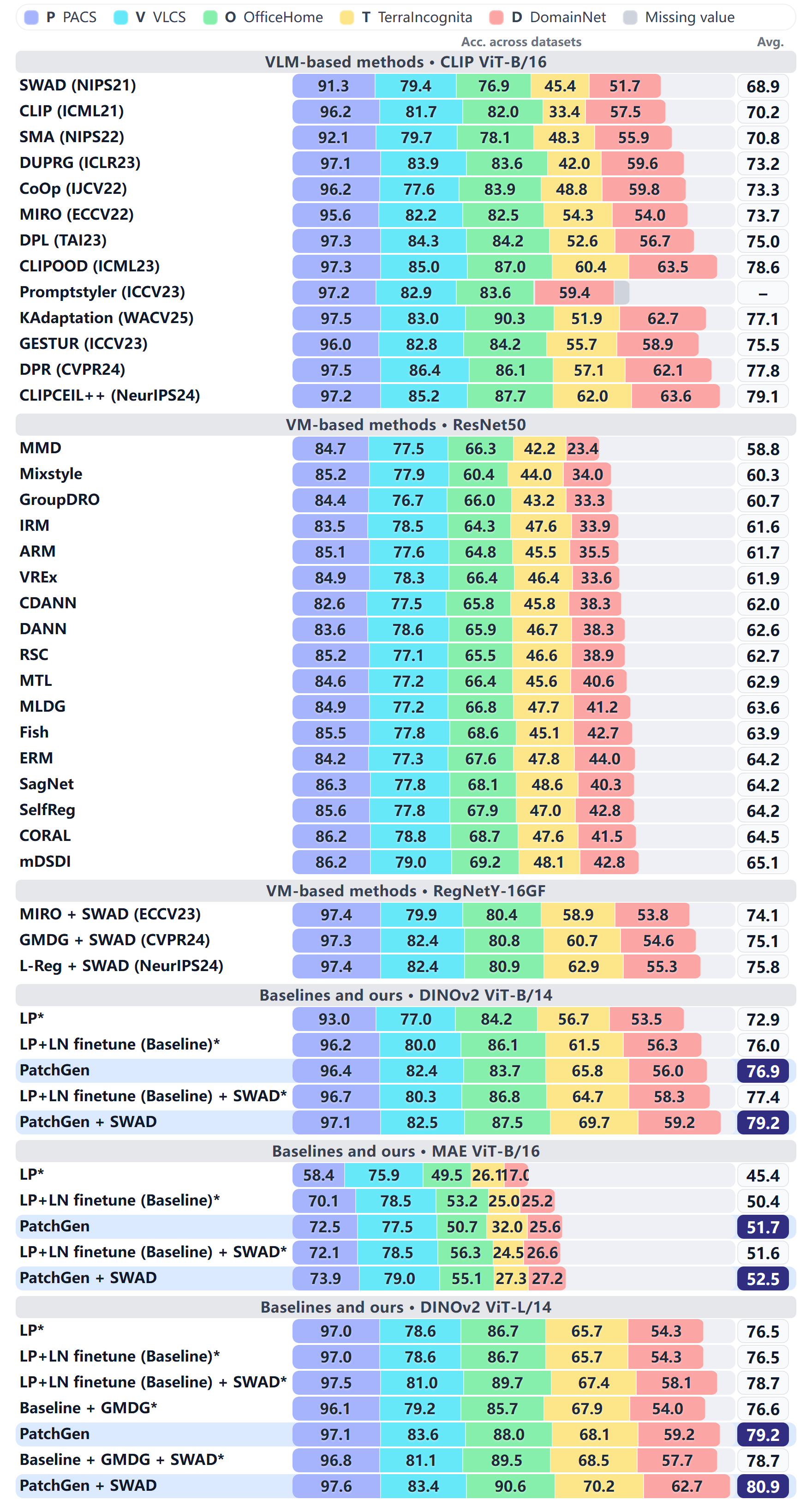} 
\caption{\textbf{MDG results for natural images}: Comparison between PatchGen and previous MDG methods, including VLM-based, VM-based methods, and causal-based methods using VMs.
The detailed results are in Appendix Table~\ref{tab:mDG_results}.}
\label{fig:mDG_results}
\end{figure}

\begin{table*}[t]
\centering
\caption{\textbf{MDG results for natural images}: Comparison between the proposed and previous MDG methods, including VLM-based, VM-based methods, and causal-based methods using VMs.
Dataset notation: \textbf{P}: PACS, \textbf{V}: VLCS, \textbf{O}: OfficeHome, \textbf{T}: TerraIncognita, \textbf{D}: DomainNet.
{\textbf{Note:} For DINOv2 ViT-L/14, LP and LP+LN have identical values after rounding; their unrounded averages differ by less than 0.1 percentage points.}
}
\label{tab:mDG_results}
{\scriptsize\setlength{\tabcolsep}{1pt}\renewcommand{\arraystretch}{0.94}
\begin{tabular}{llccccc|c}
 \toprule
  Test domain & & \bf \;\;\;\;\;\;P\;\;\;\;\;\; & \bf \;\;\;\;\;\;V\;\;\;\;\;\; & \bf \;\;\;\;\;\;O\;\;\;\;\;\; & \bf \;\;\;\;\;\;T\;\;\;\;\;\; & \bf \;\;\;\;\;\;D\;\;\;\;\;\; & \bf Avg. \\ \midrule
Method & Pretrain & \multicolumn{6}{c}{\bf VLM-based methods} \\ 
 \cmidrule{2-8}
SWAD~\cite{cha2021swad} (NIPS21)& CLIP ViT-B/16 & 91.3 &79.4 &76.9 & 45.4&51.7& 68.9 \\
CLIP~\cite{radford2021learning}  (ICML21)& CLIP ViT-B/16 &  96.2 &81.7& 82.0&33.4 & 57.5 &70.2 \\
SMA~\cite{arpit2022ensemble}  (NIPS22) & CLIP ViT-B/16 & 92.1 &79.7 &78.1& 48.3 & 55.9&70.8 \\
DUPRG~\cite{niu2022domain}  (ICLR23)& CLIP ViT-B/16 &  97.1& 83.9 &83.6 &42.0  &59.6&73.2 \\
CoOp~\cite{zhou2022learning}  (IJCV22)& CLIP ViT-B/16 &  96.2& 77.6& 83.9& 48.8  &59.8&73.3 \\
MIRO~\cite{cha2022miro}  (ECCV22) & CLIP ViT-B/16 & 95.6 &82.2& 82.5& 54.3 & 54.0&73.7 \\
DPL~\cite{zhang2023domain}  (TAI23) & CLIP ViT-B/16 & 97.3 &84.3& 84.2&  52.6& 56.7& 75.0 \\
CLIPOOD~\cite{shu2023clipood}  (ICML23)& CLIP ViT-B/16 &  97.3& 85.0& 87.0& 60.4& 63.5&  78.6 \\
Promptstyler~\cite{cho2023promptstyler}  (ICCV23) & CLIP ViT-B/16 & 97.2 &82.9 &83.6& -& 59.4 &- \\
KAdaptation~\cite{lee2025domain}  (WACV25)& CLIP ViT-B/16 &  97.5& 83.0 &90.3& 51.9& 62.7& 77.1 \\
GESTUR~\cite{lew2023gestur}  (ICCV23) & CLIP ViT-B/16 & 96.0 &82.8 &84.2 &55.7&58.9&  75.5 \\
DPR~\cite{cheng2024disentangled}  (CVPR24) & CLIP ViT-B/16 & 97.5& 86.4 &86.1 & 57.1&62.1 &77.8 \\
CLIPCEIL++~\cite{yu2024clipceil}  (NeurIPS24) & CLIP ViT-B/16 & 97.2& 85.2 &87.7 &62.0&63.6 & 79.1 \\
\midrule
Method & Pretrain & \multicolumn{6}{c}{\bf VM-based methods} \\ 
 \cmidrule{2-8}
MMD~\cite{li2018domainMMD} & ResNet50 & 84.7±0.5 & 77.5±0.9 & 66.3±0.1 & 42.2±1.6 & 23.4±9.5 & 58.8 \\
Mixstyle~\cite{zhou2021domain} & ResNet50 & 85.2±0.3 & 77.9±0.5 & 60.4±0.3 & 44.0±0.7 & 34.0±0.1 & 60.3 \\
GroupDRO~\cite{sagawa2019distributionally} & ResNet50 & 84.4±0.8 & 76.7±0.6 & 66.0±0.7 & 43.2±1.1 & 33.3±0.2 & 60.7 \\
IRM~\cite{arjovsky2019invariant} & ResNet50 & 83.5±0.8 & 78.5±0.5 & 64.3±2.2 & 47.6±0.8 & 33.9±2.8 & 61.6 \\
ARM~\cite{zhang2021adaptive} & ResNet50 & 85.1±0.4 & 77.6±0.3 & 64.8±0.3 & 45.5±0.3 & 35.5±0.2 & 61.7 \\
VREx~\cite{krueger2021out} & ResNet50 & 84.9±0.6 & 78.3±0.2 & 66.4±0.6 & 46.4±0.6 & 33.6±2.9 & 61.9 \\
CDANN~\cite{li2018deep} & ResNet50 & 82.6±0.9 & 77.5±0.1 & 65.8±1.3 & 45.8±1.6 & 38.3±0.3 & 62.0 \\
DANN~\cite{ganin2016domain} & ResNet50 & 83.6±0.4 & 78.6±0.4 & 65.9±0.6 & 46.7±0.5 & 38.3±0.1 & 62.6 \\
RSC~\cite{huang2020self} & ResNet50 & 85.2±0.9 & 77.1±0.5 & 65.5±0.9 & 46.6±1.0 & 38.9±0.5 & 62.7 \\
MTL~\cite{blanchard2021domain} & ResNet50 & 84.6±0.5 & 77.2±0.4 & 66.4±0.5 & 45.6±1.2 & 40.6±0.1 & 62.9 \\
MLDG~\cite{li2018learning} & ResNet50 & 84.9±1.0 & 77.2±0.4 & 66.8±0.6 & 47.7±0.9 & 41.2±0.1 & 63.6 \\
Fish~\cite{shi2021gradient} & ResNet50 & 85.5±0.3 & 77.8±0.3 & 68.6±0.4 & 45.1±1.3 & 42.7±0.2 & 63.9 \\
ERM~\cite{vapnik1999overview} & ResNet50 & 84.2±0.1 & 77.3±0.1 & 67.6±0.2 & 47.8±0.6 & 44.0±0.1 & 64.2 \\
SagNet~\cite{nam2021reducing} & ResNet50 & {86.3}±0.2 & 77.8±0.5 & 68.1±0.1 & 48.6±1.0 & 40.3±0.1 & 64.2 \\
SelfReg~\cite{kim2021selfreg} & ResNet50 & 85.6±0.4 & 77.8±0.9 & 67.9±0.7 & 47.0±0.3 & 42.8±0.0 & 64.2 \\
CORAL~\cite{sun2016deep} & ResNet50 & 86.2±0.3 & 78.8±0.6 & 68.7±0.3 & 47.6±1.0 & 41.5±0.1 & 64.5 \\
mDSDI~\cite{bui2021exploiting} & ResNet50 & 86.2±0.2 & 79.0±0.3 & 69.2±0.4 & 48.1±1.4 & 42.8±0.1 & 65.1 \\ 
MIRO + SWAD~\cite{cha2022miro} (ECCV22) & RegNetY-16GF  & {97.4}±0.2 & 79.9±0.6 & 80.4±0.2 & 58.9±1.3 & 53.8±0.1 & 74.1 \\
{GMDG + SWAD}~\cite{tan2024rethinking} (CVPR24) & RegNetY-16GF & 97.3±0.1 & {82.4}±0.6 & {80.8}±0.6 & {60.7}±1.8 & {54.6}±0.1 & {75.1} \\
{L-Reg + SWAD}~\cite{tan2024interpret} (NeurIPS24) & RegNetY-16GF & {97.4}±0.2  &  {82.4}±0.0 &  {80.9}±0.5 & {62.9}±0.9 & {55.3}±0.0  & {75.8} \\
\midrule 
Method & Pretrain & \multicolumn{6}{c}{\bf Results of baselines and ours} \\
\cmidrule{2-8}
LP* & DINOv2 ViT-B/14&93.0&77.0&84.2&56.7&53.5&72.9\\
LP+LN finetune (Baseline)* & DINOv2 ViT-B/14 & 96.2 &80.0 &86.1 &61.5 &56.3 &76.0 \\
\rowcolor{mygray}\textbf{PatchGen} & DINOv2 ViT-B/14&96.4& 82.4&83.7&65.8&56.0&76.9\\

LP+LN finetune (Baseline) + SWAD* & DINOv2 ViT-B/14 & 96.7&80.3&86.8&64.7&58.3&77.4 \\
\rowcolor{mygray}\textbf{PatchGen} + SWAD & DINOv2 ViT-B/14& 97.1&82.5&87.5&69.7&59.2&79.2\\
\cmidrule{2-8}
LP* & MAE ViT-B/16 & 58.4& 75.9&49.5&26.1&17.0&45.4\\
LP+LN finetune (Baseline)* & MAE ViT-B/16 &70.1&78.5&53.2&25.0&25.2&50.4\\
\rowcolor{mygray}\textbf{PatchGen} & MAE ViT-B/16 &72.5 &77.5 &50.7& 32.0& 25.6&51.7 \\
LP+LN finetune (Baseline) + SWAD*  & MAE ViT-B/16& 72.1&78.5&56.3&24.5&26.6&51.6\\
\rowcolor{mygray}\textbf{PatchGen} + SWAD & MAE ViT-B/16 &73.9& 79.0& 55.1 &27.3 & 27.2 & 52.5\\

\cmidrule{2-8}
LP* & DINOv2 ViT-L/14&97.0&78.6&86.7&65.7&54.3&76.5\\
LP+LN finetune (Baseline)* & DINOv2 ViT-L/14& 97.0 & 78.6 & 86.7 & 65.7 & 54.3 & 76.5\\
LP+LN finetune (Baseline) + SWAD*  & DINOv2 ViT-L/14 & 97.5 & 81.0 & 89.7 & 67.4 & 58.1 & 78.7 \\
Baseline + GMDG* & DINOv2 ViT-L/14& 96.1 & 79.2 & 85.7 & 67.9 & 54.0 & 76.6  \\
\rowcolor{mygray}\textbf{PatchGen} & DINOv2 ViT-L/14 &97.1&83.6&88.0&68.1&59.2&79.2\\
Baseline + GMDG + SWAD* & DINOv2 ViT-L/14& 96.8 & 81.1 & 89.5 & 68.5 & 57.7 & 78.7 \\ 
\rowcolor{mygray}\textbf{PatchGen} + SWAD & DINOv2 ViT-L/14 &97.6&83.4&90.6&70.2&62.7&80.9
\\
\bottomrule 

\end{tabular}%
}
 \end{table*}

\begin{table*}[t]
\centering
\caption{\textbf{MDG results for natural images}: Comparison between the proposed and selected causal-inspired DG methods using VMs. Dataset notation: \textbf{P}: PACS, \textbf{V}: VLCS, \textbf{O}: OfficeHome, \textbf{T}: TerraIncognita, \textbf{D}: DomainNet.
}
\label{tab:mDG_results_causal}
\begin{tabular}{lcccccc|c}
 \toprule
 Test domain & Backbone & \bf \;\;P\;\; & \bf \;\;V\;\; & \bf \;\;O\;\; & \bf \;\;T\;\; & \bf \;\;D\;\;& \bf Avg. \\ \midrule
CI-DGA~\cite{gong2025causal} (IJCV25) & DeiT-S & 88.3&80.6&72.6&-&-&- \\
SMIDG~\cite{wang2025exploring} (TPAMI25) & ResNet50 & 90.4 & 80.8 &72.1 & 52.8 & 45.5 & 68.3 \\
BFMix~\cite{zhang2024mix} (ICME25) & ViT-S/16 & 82.3 & 90.1 & 82.3 & 51.5 & 53.2 & 71.9 \\
CauRDG~\cite{liu2025caurdg} (ACMMM25)& ViT-B/16 & 89.8 &-&75.3&-&49.8 & - \\
\rowcolor{mygray}\textbf{PatchGen} & ViT-S/14 & 93.8 & 81.8 & 79.1 & 54.4 & 50.7 & 72.0 \\
\rowcolor{mygray}\textbf{PatchGen} + SWAD & ViT-S/14& 94.8&82.9&82.0&55.8&52.3&73.6 \\
\rowcolor{mygray}\textbf{PatchGen} & ViT-B/14 &93.9&82.6&84.0&56.1&55.3&74.4\\
\rowcolor{mygray}\textbf{PatchGen} + SWAD & ViT-B/14&96.6&82.7&85.3&62.3&59.5&77.3 \\

\bottomrule 
\end{tabular}%
\end{table*}

\begin{table*}[!t]
\caption{\textbf{MDG results for histopathological images}: Comparison between the proposed and baselines with average and per-domain results across different backbones on the HISTOPANTUM dataset (two fully aligned classes). 
}
\label{tab:mdg_HISTOPANTUM}
\centering
{\small\setlength{\tabcolsep}{2.5pt}
\begin{tabular}{llcccccccccc}
\toprule
 &  & \multicolumn{5}{c}{Without SWAD~\cite{cha2021swad}} & \multicolumn{5}{c}{With SWAD~\cite{cha2021swad}} \\
Method & Backbone & Colon & Ovarian & Stomach & Uterus & Avg. & Colon & Ovarian & Stomach & Uterus & Avg. \\
\cmidrule(lr){3-7} \cmidrule(lr){8-12}
GESTUR* & CLIP ViT-B/16 & 90.1 & 96.6 & 87.4 & 90.1 & 91.0 & - & - & - & - & - \\
LP* & CLIP ViT-B/16 & 80.3 & 91.4 & 79.0 & 91.0 & 85.4 & 81.1 & 91.3 & 79.2 & 89.9 & 85.4 \\
LP + LN finetune (Baseline)* & CLIP ViT-B/16 & 83.5 & 95.1 & 86.6 & 89.9 & 88.8 & 84.5 & 94.9 & 86.0 & 89.6 & 88.8 \\
\rowcolor{mygray}\textbf{PatchGen} & CLIP ViT-B/16 & 86.9 & 95.1 & 86.2 & 85.6 & 88.5 & 87.2 & 95.6 & 85.1 & 90.7 & 89.6 \\
\cmidrule(lr){3-7} \cmidrule(lr){8-12}
LP* & DINOv2 ViT-B/14 & 87.3 & 93.8 & 84.1 & 90.8 & 89.0 & 88.4 & 94.6 & 85.0 & 87.2 & 88.8 \\
LP + LN finetune (Baseline)* & DINOv2 ViT-B/14 & 91.4 & 96.7 & 88.8 & 88.5 & 91.4 & 90.7 & 96.7 & 88.5 & 89.9 & 91.4 \\
Baseline + MIRO* & DINOv2 ViT-B/14 & 89.1 & 96.2 & 87.6 & 91.8 & 91.2 & 90.8 & 96.8 & 87.8 & 88.8 & 91.0 \\
Baseline + GMDG* & DINOv2 ViT-B/14 & 88.9 & 96.2 & 88.0 & 90.1 & 90.8 & 90.3 & 96.5 & 87.2 & 85.5 & 89.9 \\
\rowcolor{mygray}\textbf{PatchGen} & DINOv2 ViT-B/14 & 91.0 & 96.2 & 89.0 & 94.9 & 92.8 & 92.1 & 97.0 & 88.3 & 93.9 & 92.8 \\
\cmidrule(lr){3-7} \cmidrule(lr){8-12}
LP* & DINOv2 ViT-L/14 & 90.3 & 92.4 & 84.5 & 93.1 & 90.1 & 88.9 & 94.5 & 84.2 & 90.9 & 89.6 \\
LP + LN finetune (Baseline)* & DINOv2 ViT-L/14 & 92.8 & 97.1 & 88.5 & 90.1 & 92.1 & 93.4 & 96.7 & 88.6 & 88.2 & 91.7 \\
\rowcolor{mygray}\textbf{PatchGen} & DINOv2 ViT-L/14 & 92.3 & 96.8 & 87.8 & 93.7 & 92.6 & 93.0 & 97.1 & 87.0 & 93.8 & 92.7 \\
\cmidrule(lr){3-7} \cmidrule(lr){8-12}
LP* & MAE ViT-B/16 & 80.8 & 88.7 & 81.9 & 75.1 & 81.6 & 82.3 & 86.8 & 81.8 & 75.0 & 81.5 \\
LP + LN finetune (Baseline)* & MAE ViT-B/16 & 84.5 & 94.1 & 86.5 & 87.6 & 88.2 & 85.4 & 93.4 & 85.5 & 88.3 & 88.2 \\
\rowcolor{mygray}\textbf{PatchGen} & MAE ViT-B/16 & 86.9 & 95.6 & 85.9 & 89.0 & 89.3 & 88.3 & 95.1 & 86.0 & 90.4 & 89.9 \\
\bottomrule
\end{tabular}%
}
\end{table*}

\begin{table*}[!t]
\caption{\textbf{MDG results for histopathological images}: Comparison between the proposed and baselines with average and per-domain results across different backbones on the HISTOCOLON dataset (ten partially aligned classes). 
}
\label{tab:mdg_HISTOCOLON}
\centering
\begin{tabular}{llcccccccc}
\toprule
 &  & \multicolumn{4}{c}{Without SWAD} & \multicolumn{4}{c}{With SWAD} \\
Method & Backbone & CRC-TP & K-16 & K-19 & Avg. & CRC-TP & K-16 & K-19 & Avg. \\
\cmidrule(lr){3-6} \cmidrule(lr){7-10}
GESTUR* & CLIP ViT-B/16 & 39.0 & 63.6 & 53.7 & 52.1 & - & - & - & - \\
LP* & CLIP ViT-B/16 & 29.9 & 49.8 & 37.4 & 39.0 & 29.2 & 51.7 & 38.8 & 39.9 \\
LP + LN finetune (Baseline)* & CLIP ViT-B/16 & 37.1 & 55.8 & 44.2 & 45.7 & 37.7 & 56.2 & 43.9 & 45.9 \\
\rowcolor{mygray}\textbf{PatchGen} & CLIP ViT-B/16 & 42.0 & 58.3 & 52.8 & 51.0 & 42.7 & 63.9 & 54.5 & 53.7 \\
\cmidrule(lr){3-6} \cmidrule(lr){7-10}
LP* & DINOv2 ViT-B/14 & 49.1 & 52.4 & 55.3 & 52.2 & 48.8 & 51.3 & 55.9 & 52.0 \\
LP + LN finetune (Baseline)* & DINOv2 ViT-B/14 & 52.5 & 53.3 & 52.7 & 52.8 & 50.8 & 52.1 & 55.5 & 52.8 \\
Baseline + MIRO* & DINOv2 ViT-B/14 & 51.9 & 50.5 & 56.6 & 53.0 & 51.5 & 52.4 & 57.3 & 53.7 \\
Baseline + GMDG* & DINOv2 ViT-B/14 & 51.9 & 53.6 & 56.6 & 54.0 & 51.7 & 52.4 & 57.1 & 53.7 \\
\rowcolor{mygray}\textbf{PatchGen} & DINOv2 ViT-B/14 & 50.7 & 52.9 & 59.8 & 54.4 & 50.2 & 54.1 & 57.9 & 54.1 \\
\cmidrule(lr){3-6} \cmidrule(lr){7-10}
LP* & DINOv2 ViT-L/14 & 51.6 & 52.6 & 55.5 & 53.2 & 53.1 & 52.2 & 59.2 & 54.8 \\
LP + LN finetune (Baseline)* & DINOv2 ViT-L/14  & 51.9 & 54.6 & 56.6 & 54.3 & 51.7 & 52.4 & 57.1 & 53.7 \\
\rowcolor{mygray}\textbf{PatchGen} & DINOv2 ViT-L/14 & 52.9 & 51.7 & 60.1 & 54.9 & 53.6 & 52.8 & 65.2 & 57.2 \\
\cmidrule(lr){3-6} \cmidrule(lr){7-10}
LP* & MAE ViT-B/16 & 37.2 & 47.6 & 39.6 & 41.5 & 35.7 & 47.6 & 39.6 & 41.0 \\
LP + LN finetune (Baseline)* & MAE ViT-B/16 & 38.9 & 55.4 & 39.9 & 44.7 & 38.2 & 57.0 & 41.6 & 45.6 \\
\rowcolor{mygray}\textbf{PatchGen} & MAE ViT-B/16 & 43.9 & 42.3 & 48.5 & 44.9 & 39.0 & 52.6 & 45.3 & 45.6 \\
\bottomrule
\end{tabular}%
\end{table*}

\begin{table*}[!t]
\caption{\textbf{CCD results}: Results of applying our method to PromptCCD with various backbones. \textbf{A}: All classes; \textbf{K}: Known classes; \textbf{U}: Unknown classes.}
\label{tab:CCD_res}
\centering
{\small\setlength{\tabcolsep}{2.5pt}
\begin{tabular}{lccccccccccccc}
\toprule
 \multicolumn{1}{c}{Backbone}&  & \multicolumn{6}{c}{DINO} & \multicolumn{6}{c}{DINOv2} \\
\cmidrule(lr){3-8} \cmidrule(lr){9-14}
\multicolumn{1}{c}{Method} &  & \multicolumn{3}{c}{PromptCCD*} & \multicolumn{3}{c}{+\textbf{PatchGen}} & \multicolumn{3}{c}{PromptCCD*} & \multicolumn{3}{c}{+\textbf{PatchGen}} \\
\midrule
 Datasets & \multicolumn{1}{c}{Session} & \multicolumn{1}{c}{\;\;\;\;A\;\;\;\;} & \multicolumn{1}{c}{\;\;\;\;K\;\;\;\;} & \multicolumn{1}{c}{\;\;\;\;U\;\;\;\;} & \multicolumn{1}{c}{\;\;\;\;A\;\;\;\;} & \multicolumn{1}{c}{\;\;\;\;K\;\;\;\;} & \multicolumn{1}{c}{\;\;\;\;U\;\;\;\;} & \multicolumn{1}{c}{\;\;\;\;A\;\;\;\;} & \multicolumn{1}{c}{\;\;\;\;K\;\;\;\;} & \multicolumn{1}{c}{\;\;\;\;U\;\;\;\;} & \multicolumn{1}{c}{\;\;\;\;A\;\;\;\;} & \multicolumn{1}{c}{\;\;\;\;K\;\;\;\;} & \multicolumn{1}{c}{\;\;\;\;U\;\;\;\;} \\
\cmidrule(lr){3-5} \cmidrule(lr){6-8} \cmidrule(lr){9-11} \cmidrule(lr){12-14}
CIFAR-100 & 1 & 72.0 & 81.8 & 65.2 & \cellcolor{mygray}75.3 & \cellcolor{mygray}84.0 & \cellcolor{mygray}69.2 & 77.2 & 89.1 & 68.9 & \cellcolor{mygray}76.3 & \cellcolor{mygray}86.9 & \cellcolor{mygray}68.9 \\
 & 2 & 57.7 & 77.9 & 53.8 & \cellcolor{mygray}62.6 & \cellcolor{mygray}74.6 & \cellcolor{mygray}60.3 & 59.6 & 78.6 & 55.9 & \cellcolor{mygray}73.1 & \cellcolor{mygray}82.8 & \cellcolor{mygray}71.3 \\
 & 3 & 48.2 & 73.2 & 43.8 & \cellcolor{mygray}47.9 & \cellcolor{mygray}72.8 & \cellcolor{mygray}43.5 & 49.9 & 75.0 & 45.5 & \cellcolor{mygray}49.6 & \cellcolor{mygray}74.9 & \cellcolor{mygray}45.2 \\
\textbf{} & Avg. & 59.3 & 77.6 & 54.3 & \cellcolor{mygray}61.9 & \cellcolor{mygray}77.1 & \cellcolor{mygray}57.7 & 62.2 & 80.9 & 56.8 & \cellcolor{mygray}66.4 & \cellcolor{mygray}81.5 & \cellcolor{mygray}61.8 \\
\midrule
CUB & 1 & 56.5 & 76.1 & 43.4 & \cellcolor{mygray}55.7 & \cellcolor{mygray}73.6 & \cellcolor{mygray}43.7 & 70.4 & 89.3 & 57.8 & \cellcolor{mygray}70.5 & \cellcolor{mygray}88.2 & \cellcolor{mygray}58.7 \\
 & 2 & 46.3 & 70.0 & 41.2 & \cellcolor{mygray}50.3 & \cellcolor{mygray}72.9 & \cellcolor{mygray}45.5 & 64.1 & 81.4 & 60.5 & \cellcolor{mygray}69.6 & \cellcolor{mygray}84.3 & \cellcolor{mygray}66.5 \\
 & 3 & 56.3 & 77.9 & 52.1 & \cellcolor{mygray}59.9 & \cellcolor{mygray}75.0 & \cellcolor{mygray}57.0 & 68.2 & 82.1 & 65.5 & \cellcolor{mygray}66.4 & \cellcolor{mygray}84.3 & \cellcolor{mygray}63.0 \\
\textbf{} & Avg. & 53.0 & 74.6 & 45.6 & \cellcolor{mygray}55.3 & \cellcolor{mygray}73.8 & \cellcolor{mygray}48.7 & 67.6 & 84.3 & 61.2 & \cellcolor{mygray}68.9 & \cellcolor{mygray}85.6 & \cellcolor{mygray}62.7 \\
\bottomrule
\end{tabular}%
}
\end{table*}

\begin{table*}[!t]
\caption{\textbf{MDG+GCD results}: Accuracy scores of each dataset. \textbf{A}: All classes; \textbf{K}: Known classes; \textbf{U}: Unknown classes.}
\label{tab:avg_mdg_gcd_more_results}
{\scriptsize\setlength{\tabcolsep}{1.2pt}
\begin{tabular}{@{}llccccccccccccccc@{}}
\toprule
  &\multicolumn{1}{c}{Test domain}& \multicolumn{3}{c}{\textbf{PACS}}                                                         & \multicolumn{3}{c}{\textbf{OfficeHome}}                                                   & \multicolumn{3}{c}{\textbf{VLCS}}                                                         & \multicolumn{3}{c}{\textbf{TerraIncognita}}                                                & \multicolumn{3}{c}{\textbf{DomainNet}}                                                     \\ 
\cmidrule(lr){3-5} \cmidrule(lr){6-8} \cmidrule(lr){9-11} \cmidrule(lr){12-14} \cmidrule(lr){15-17}
Method&Backbone& \;\;\;\;{A}\;\;\;\;  & \;\;\;\;{K}\;\;\;\;  & \;\;\;\;{U}\;\;\;\; & \;\;\;\;{A}\;\;\;\;  & \;\;\;\;{K}\;\;\;\;  & \;\;\;\;{U}\;\;\;\; & \;\;\;\;{A}\;\;\;\;  & \;\;\;\;{K}\;\;\;\;  & \;\;\;\;{U}\;\;\;\; & \;\;\;\;{A}\;\;\;\;  & \;\;\;\;{K}\;\;\;\;  & \;\;\;\;{U}\;\;\;\;        & \;\;\;\;{A}\;\;\;\;  & \;\;\;\;{K}\;\;\;\;  & \;\;\;\;{U}\;\;\;\;         \\ 
\cmidrule(lr){3-5} \cmidrule(lr){6-8} \cmidrule(lr){9-11} \cmidrule(lr){12-14} \cmidrule(lr){15-17}
ERM~\cite{vapnik1999overview} &RegNetY-16GF& 57.26                        & 77.77                        & 22.33                        & 44.80                        & 74.67                        & 8.50                         & 61.51                        & 82.89                        & 34.88                        & 37.34                        & 20.46                        & 45.15                         & 22.56                        & 40.89                        & 6.85                         \\
PIM~\cite{chiaroni2023parametric} &RegNetY-16GF& 56.35                        & 71.06                        & 27.43                        & 43.42                        & 72.44                        & 8.13                         & 63.19                        & 80.34                        & 40.24                        & 47.75                        & 35.31                        & 50.85                         & 24.03                        & 42.59                        & 7.86                         \\
MIRO~\cite{cha2022miro} &RegNetY-16GF& 56.83                        & 85.62                        & 24.85                        & 48.28                        & 80.61                        & 9.03                         & 61.53                        & 82.72                        & 35.03                        & 50.22                        & 39.92                        & 49.45                         & 31.49                        & 55.44                        & 10.57                        \\
GMDG~\cite{tan2024rethinking}&RegNetY-16GF& 58.33                        & 91.46                        & 10.18                        & 48.85                        & 81.41                        & 9.22                         & 61.36                        & 83.31                        & 33.75                        &  40.02	& 32.38	 & 40.07 & {31.15} & 55.17                        & 10.18                        \\
L-Reg~\cite{tan2024interpret}&RegNetY-16GF& 67.82                        & 91.86                        & 31.33                        & 51.96                        & 79.74                        & 18.15                        & 62.32                        & 82.77                        & 36.09                        &45.86& 39.77&	41.55                         & 31.75                        & 55.18                        & 11.30                        \\
\midrule
Baseline*  & DINOv2 ViT-B/14&96.18&96.38&96.45&79.26&87.52&72.02&85.22&87.45&82.52&63.32&58.80&67.82&56.44&56.96&56.38  \\
L-Reg* & DINOv2 ViT-B/14&96.13&95.63&96.98&81.36&91.72&72.11&84.00&86.46&80.89&59.54&47.36&67.69&55.74&55.76&56.11 \\
\rowcolor{mygray}\textbf{PatchGen} & DINOv2 ViT-B/14&96.48&96.81&96.28&82.55&92.84&72.97&85.66&88.39&82.33&63.87&53.98&69.85&56.38&56.68&56.53 \\
\cmidrule(lr){3-5} \cmidrule(lr){6-8} \cmidrule(lr){9-11} \cmidrule(lr){12-14} \cmidrule(lr){15-17}
Baseline*  & DINOv2 ViT-L/14&96.82&97.73&96.21&79.49&85.02&74.44&88.78&90.38&86.89&68.55&58.79&76.11&59.53&59.78&59.73	\\
\rowcolor{mygray}\textbf{PatchGen} & DINOv2 ViT-L/14&97.11&97.50&96.91&82.92&91.84&74.98&89.08&90.92&86.90&68.66&61.46&75.34&59.85&59.95&60.13\\
\cmidrule(lr){3-5} \cmidrule(lr){6-8} \cmidrule(lr){9-11} \cmidrule(lr){12-14} \cmidrule(lr){15-17}
Baseline* & MAE ViT-B/16&69.79&72.74&62.67&76.91&87.71&65.60&53.93&59.77&46.86&40.84&28.70&49.74&26.47&25.84&27.11\\
\rowcolor{mygray}\textbf{PatchGen} & MAE ViT-B/16&70.73&74.86&61.37&76.24&85.11&65.97&54.97&59.82&49.11&45.41&24.28&59.30&27.16&26.32&27.93\\
\bottomrule
\end{tabular}%
}
\end{table*}

\begin{table*}[t]
\caption{\textbf{MDG+GCD results}: Accuracy scores averaged across all datasets. \textbf{A}: All classes; \textbf{K}: Known classes; \textbf{U}: Unknown classes. Detailed results of each dataset are presented in Table~\ref{tab:avg_mdg_gcd_more_results}.}
\label{tab:avg_mdg_gcd_results}
\centering
{\small\setlength{\tabcolsep}{2.5pt}
\begin{tabular}{llccc}
\toprule
{Method} & {Backbone} & \;\;\;\;A\;\;\;\;    & \;\;\;\;K\;\;\;\;     & \;\;\;\;U\;\;\;\;     \\
\cmidrule(lr){3-5}
ERM & RegNetY-16GF & 44.69 & 59.34 & 23.54 \\
PIM & RegNetY-16GF & 46.95 & 60.35 & 26.90 \\
MIRO & RegNetY-16GF & 49.67 & 68.86 & 25.79 \\
GMDG & RegNetY-16GF & 47.94 & 68.75 & 20.68 \\
L-Reg & RegNetY-16GF & 51.94 & 69.86 & 27.68 \\
\midrule
Baseline* & DINOv2 ViT-B/14 & 76.08 & 77.42 & 75.04 \\
L-Reg* & DINOv2 ViT-B/14 & 75.35 & 75.39 & 74.76 \\
\rowcolor{mygray}\textbf{PatchGen} & DINOv2 ViT-B/14 & 76.99 & 77.74 & 75.59 \\
\cmidrule(lr){3-5}
Baseline* & DINOv2 ViT-L/14 & 78.63 & 78.34 & 78.68 \\
\rowcolor{mygray}\textbf{PatchGen} & DINOv2 ViT-L/14 & 79.52 & 80.33 & 78.85 \\
\cmidrule(lr){3-5}
Baseline* & MAE ViT-B/16 & 53.59 & 54.95 & 50.40 \\
\rowcolor{mygray}\textbf{PatchGen} & MAE ViT-B/16 & 54.90 & 54.08 & 52.74 \\
\bottomrule
\end{tabular}%
}
\end{table*}

\begin{table*}[!t]
\caption{\textbf{Ablation results} across five mDG datasets. \textbf{P}: PACS, \textbf{V}: VLCS, \textbf{O}: OfficeHome, \textbf{T}: TerraIncognita, \textbf{D}: DomainNet.}
\label{tab:ablation}
\centering
{\scriptsize\setlength{\tabcolsep}{1.2pt}
\begin{tabular}{llcccccccccccc}
\toprule
 && \multicolumn{6}{c}{Without SWAD} & \multicolumn{6}{c}{With SWAD} \\
 \cmidrule(lr){3-8}  \cmidrule(lr){9-14}
Task ID && \textbf{\;\;P\;\;} & \textbf{\;\;V\;\;} & \textbf{\;\;O\;\;} & \textbf{\;\;T\;\;} & \textbf{\;\;D\;\;} & \textbf{Avg.} & \textbf{\;\;P\;\;} & \textbf{\;\;V\;\;} & \textbf{\;\;O\;\;} & \textbf{\;\;T\;\;} & \textbf{\;\;D\;\;} & \textbf{Avg.} \\
  \cmidrule(lr){3-8}  \cmidrule(lr){9-14}
1.1 &LP + LN finetune (Baseline: B) & 97.0 & 78.6 & 86.7 & 65.7 & 54.3 & 76.5 & 97.5 & 81.0 & 89.7 & 67.4 & 58.1 & 78.7 \\
1.2 & B + GMDG & 96.1 & 79.2 & 85.7 & 67.9 & 54.0 & 76.6 & 96.8 & 81.1 & 89.5 & 68.5 & 57.7 & 78.7 \\ \midrule
2.1 & B + $\mathcal{M}_{s}$ & 96.8 & 81.4 & 85.3 & 64.7 & 57.4 & 77.1 & 97.4 & 81.4 & 88.9 & 66.8 & 60.1 & 78.9 \\
2.2 &B + $\mathcal{M}_{d}$ & 96.7 & 80.9 & 86.3 & 65.9 & 56.7 & 77.3 & 97.7 & 81.1 & 88.9 & 67.1 & 60.5 & 79.1 \\
2.3 & B + $Attn$ & 96.8 & 82.1 & 86.7 & 67.3 & 56.9 & 78.0 & 97.5 & 82.0 & 89.1 & 69.2 & 60.5 & 79.7 \\
2.4 & B + $Attn + {L}_{ms}$  & 97.8 & 81.9 & 86.4 & 65.8 & 57.7 & 77.9 & 97.8 & 81.6 & 89.6 & 69.0 & 60.5 & 79.7 \\ \midrule
3.1 & B + $\mathcal{M}_{p}$ & 96.7 & 82.2 & 86.6 & 66.5 & 57.3 & 77.9 & 97.9 & 82.4 & 89.7 & 69.0 & 60.6 & 79.9 \\
3.2 & B + $\mathcal{M}_{p}$ + $10 \times {L}_{ms}$ & 96.9 & 81.5 & 86.7 & 66.2 & 57.9 & 77.8 & 97.5 & 83.3 & 90.0 & 70.2 & 60.9 & 80.4 \\
3.3 & B + $\mathcal{M}_{p}$ + ${L}_{ms}$ & 96.8 & 83.4 & 86.8 & 68.4 & 58.4 & 78.7 & 97.0 & 83.1 & 90.0 & 70.6 & 62.2 & 80.6 \\
3.4 & B + $\mathcal{M}_{p}$ + ${L}_{ms}$ + $\mathcal L_{\mathrm{conf}}$ & 97.5 & 82.6 & 87.4 & 67.4 & 58.9 & 78.8 & 97.5 & 83.0 & 90.2 & 70.5 & 62.5 & 80.7 \\
\rowcolor{mygray} 3.5 & B + $\mathcal{M}_{p}$ + ${L}_{ms}$ + $\mathcal L_{\mathrm{conf}}$ + ${L}_{sim}$ (\textbf{PatchGen}) & 97.1 & 83.6 & 88.0 & 68.1 & {59.2} & 79.2 & 97.6 & 83.4 & 90.6 & 70.2 & 62.7 & 80.9 \\ 
\rowcolor{mygray}&{\textbf{Improvements from the baseline}} &0.1$\uparrow$&5.0$\uparrow$&1.3$\uparrow$&2.5$\uparrow$&4.9$\uparrow$&2.7$\uparrow$&0.1$\uparrow$&2.4$\uparrow$&0.9$\uparrow$&2.8$\uparrow$&4.6$\uparrow$&2.2$\uparrow$
\\
\bottomrule
\end{tabular}%
}
\end{table*}

\section{Additional results}
\label{app:additional_res}

\subsection{Patch-selection design analysis}

\begin{figure*}[t]
\centering
\includegraphics[width=\linewidth]{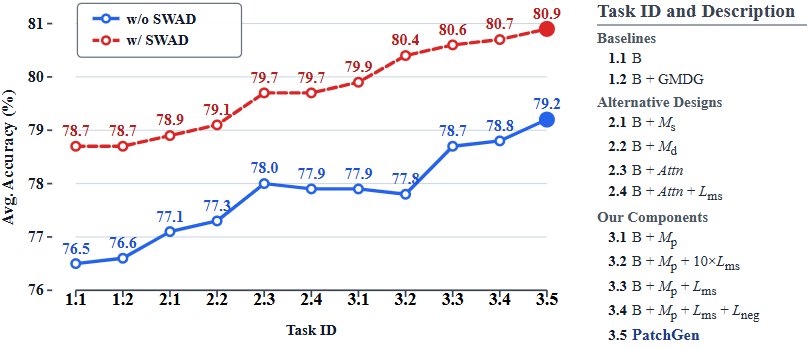}
\caption{\textbf{Ablation results}: 
Average-accuracy trend as components are added, plus per-dataset gains of the full model over the baseline. The detailed results of each dataset are in Appendix Table~\ref{tab:ablation}.}
\label{fig:ablation_vis}
\end{figure*}

Figure~\ref{fig:ablation_vis} reports additional results analyzing the contribution of each framework component; exact numeric values are provided in Table~\ref{tab:ablation}.

The first block (Tasks~1.1--1.2) presents baseline methods.

The second block (Tasks~2.1--2.4) evaluates alternative design choices to justify our formulation:
$\bullet$ \textbf{B + $\mathcal{M}_{s}$}: Baseline with a learnable patch mask $\mathcal{M}_{s}$ shared across all samples (static).
$\bullet$ \textbf{B + $\mathcal{M}_{d}$}: Baseline with a learnable dimension-wise mask on feature channels.
$\bullet$ \textbf{B + $Attn$}: Baseline with an additional standard attention module.
$\bullet$ \textbf{B + $Attn$ + $L_{ms}$}: Attention-based variant further optimized with our mask-sharpening loss $\mathcal L_{ms}$, where attention maps are normalized to $[0,1]$ before applying the loss.
\blue{For this variant, the scalar patch score is computed from the pre-softmax attention logits using the same outgoing-score reduction as main-paper Eq.~(3), followed by min--max normalization over patches within each image before applying $\mathcal L_{ms}$.}

The third block (Tasks~3.1--3.5) presents ablations of our proposed components, where $\mathcal M_p$ denotes the sample-dependent mask module that outputs
$\mathbf m_\phi(X)$, and $m_{\phi,p}(X)$ denotes the weight assigned to patch
$p$.
\blue{Note that \textbf{$B+\mathcal{M}_p$} (Task~3.1) is an equal-parameter selector control: it uses the same sample-dependent mask module as PatchGen but omits the predictive-subset constraints. This comparison helps isolate the effect of the proposed constraints from added trainable capacity.}

\textbf{Comparison with other masking strategies.}
\blue{Compared with static patch masking ($\mathcal{M}_{s}$) and dimension-wise masking ($\mathcal{M}_{d}$), our learned patch-wise masking $\mathcal{M}_{p}$ yields higher average accuracy in both SWAD settings, although the gain is not uniform for every individual dataset.}
This suggests that sample-adaptive patch modeling is more effective than globally shared or channel-level masking at learning a
useful soft proxy for the oracle intra-image predictive subset.

\textbf{Comparison with standard attention.}
The sample-dependent mask module is competitive with standard attention
without SWAD and provides a small average advantage with SWAD. Applying
$\mathcal L_{ms}$ directly to standard attention does not improve its average
performance, indicating that low-score suppression alone is insufficient.
The strongest results are obtained when the proposed selector is combined
with mask suppression, selected-confidence regularization, and
class-conditional feature alignment.

\textbf{Ablation of proposed components.}
Tasks~3.1 and~3.3--3.5 form the principal incremental ablation, whereas
Task~3.2 evaluates an increased weight on $\mathcal L_{ms}$.
The selected-confidence and class-conditional objectives provide modest
additional average gains, and the complete model achieves the strongest
overall result, although individual intermediate variants do not improve monotonically.
This suggests that the objectives are most effective when combined.

\begin{table*}[!t]
\centering
\caption{Additional mDG results comparing LN and full finetuning. \textbf{P}: PACS, \textbf{V}: VLCS, \textbf{O}: OfficeHome, \textbf{T}: TerraIncognita, \textbf{D}: DomainNet.}
\label{tab:mDG_results_full_finetune}
{\small\setlength{\tabcolsep}{2.5pt}
\begin{tabular}{llccccc|c}
 \toprule
  Test domain & & \bf \;\;\;\;\;\;P\;\;\;\;\;\; & \bf \;\;\;\;\;\;V\;\;\;\;\;\; & \bf \;\;\;\;\;\;O\;\;\;\;\;\; & \bf \;\;\;\;\;\;T\;\;\;\;\;\; & \bf \;\;\;\;\;\;D\;\;\;\;\;\; & \bf Avg. \\ \midrule
Method & Pretrain & \multicolumn{6}{c}{\bf Results of baselines and ours} \\
\cmidrule{2-8}
LP* & DINOv2 ViT-B/14&93.0&77.0&84.2&56.7&53.5&72.9\\
LP+LN finetune (Baseline)* & DINOv2 ViT-B/14 & 96.2 &80.0 &86.1 &61.5 &56.3 &76.0 \\

LP+LN finetune (Baseline) + SWAD* & DINOv2 ViT-B/14 & 96.7&80.3&86.8&64.7&58.3&77.4 \\
\midrule
LP+full finetune (Baseline)* &  DINOv2 ViT-B/14& 95.9 &  82.6 & 82.2 & 61.6 & 56.2 & 75.7 \\
LP+full finetune (Baseline) + SWAD* &  DINOv2 ViT-B/14& 96.5 & 82.2 & 82.3 & 64.8 & 59.3 & 77.0\\
\rowcolor{mygray}\textbf{PatchGen}+full finetune + SWAD & DINOv2 ViT-B/14&96.6&82.7&85.3&62.3&59.5&77.3\\
\bottomrule 

\end{tabular}%
}
\end{table*}

\subsection{Other results and analysis}

\textbf{PatchGen with full-model fine-tune.}
Table~\ref{tab:mDG_results_full_finetune} reports the performance of PatchGen under full model fine-tuning across all mDG datasets.
We observe that full fine-tuning may disturb the pretrained knowledge of the backbone, leading to inferior performance compared to LN fine-tuning.
With full-model fine-tuning and SWAD, PatchGen improves the corresponding
baseline average by $0.3$ percentage points, although the gains are not
uniform across individual datasets.

\begin{table*}[!t]
\vspace{0.2cm}
\caption{Sensitivity and robustness analysis on TerraIncognita for mDG.}
\label{tab:analysis}
\centering
{\small\setlength{\tabcolsep}{2.5pt}
\begin{tabular}{ccccccccccccc}
\toprule
 &  &  & \multicolumn{5}{c}{Without SWAD} & \multicolumn{5}{c}{With SWAD} \\
Seed & $\lambda_{ms}$ & $\lambda_{sim}$ & Location100 & Location38 & Location43 & Location46 & Avg. & Location100 & Location38 & Location43 & Location46 & Avg. \\
\cmidrule(lr){4-8}\cmidrule(lr){9-13}
 &  &  & \multicolumn{10}{c}{Sensitivity to weight of $\lambda_{sim}$}  \\
1 & 0.001 & 0.01 & 75.8 & 63.4 & 71.0 & 62.0 & \blue{68.1} & 76.6 & 64.7 & 72.7 & 64.5 & \blue{69.6} \\
1 & 0.001 & 0.001 & 76.1 & 61.8 & 68.8 & 60.1 & 66.7 & 79.1 & 63.8 & 71.6 & 66.3 & 70.2 \\
1 & 0.001 & 0.0001 & 76.2 & 61.2 & 70.2 & 64.8 & 68.1 & 77.2 & 64.0 & 71.1 & 65.9 & 69.6 \\
\cmidrule(lr){4-8}\cmidrule(lr){9-13}
 &  &  & \multicolumn{10}{c}{Sensitivity to weight of $\lambda_{ms}$}  \\
1 & 0.01 & 0.001 & 74.5 & 61.1 & 70.4 & 63.3 & 67.3 & 78.8 & 59.5 & 72.3 & 65.0 & 68.9 \\
1 & 0.001 & 0.001 & 76.1 & 61.8 & 68.8 & 60.1 & 66.7 & 79.1 & 63.8 & 71.6 & 66.3 & 70.2 \\
1 & 0.0001 & 0.001 & 78.2 & 61.9 & 69.6 & 62.8 & 68.1 & 77.2 & 62.5 & 69.8 & 65.0 & 68.6 \\
\cmidrule(lr){4-8}\cmidrule(lr){9-13}
 &  &  & \multicolumn{10}{c}{Results acorss different seeds}  \\
0 & 0.001 & 0.001 & 75.7 & 61.5 & 71.3 & 60.0 & 67.1 & 79.3 & 64.0 & 72.1 & 64.4 & 70.0 \\
1 & 0.001 & 0.001 & 76.1 & 61.8 & 68.8 & 60.1 & 66.7 & 79.1 & 63.8 & 71.6 & 66.3 & 70.2 \\
2 & 0.001 & 0.001 & 76.0 & 63.2 & 68.6 & 58.8 & 66.6 & 77.7 & 65.0 & 70.2 & 66.1 & 69.8 \\
\bottomrule
\end{tabular}%
}
\end{table*}


\textbf{Sensitivity and robustness analysis.}
As shown in Table~\ref{tab:analysis}, our method exhibits stable
performance across different loss weights and random seeds without and with
SWAD, indicating the robustness of PatchGen to hyperparameter choices
and initialization.
Across seeds $\{0,1,2\}$, the standard deviation of average accuracy is
below $0.3$ percentage points. Across the tested loss weights, average
accuracy varies within 
\blue{$1.4$ percentage points without SWAD and $1.6$ percentage points with SWAD when the default setting is included}.
The numeric results are shown in Table~\ref{tab:analysis}.
 
\begin{figure*}[!t]
    \centering
    \includegraphics[width=\linewidth]{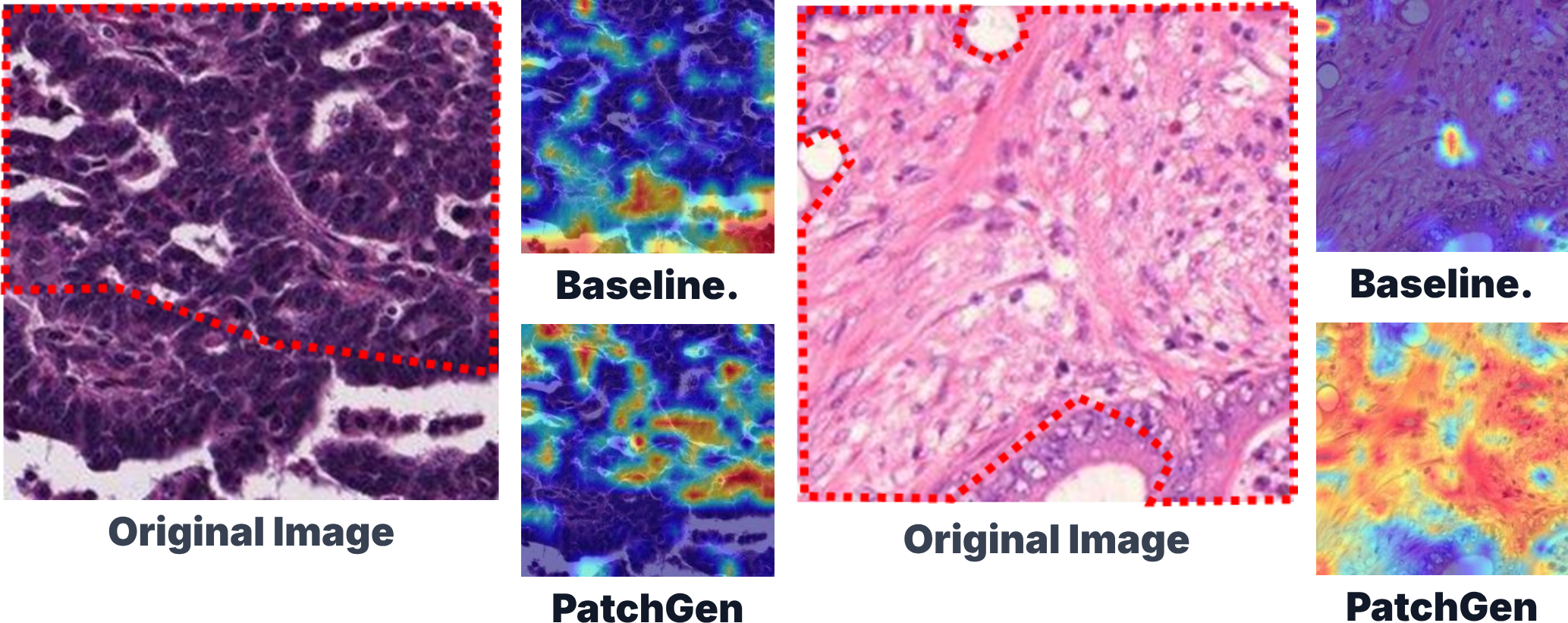}
    \caption{Additional pathology tumor examples under the mDG setting.
    Highlighted regions correspond to regions visually consistent with diagnostic tissue morphology.
    Please check datasets's official metadata for details.  
    }
    \label{fig:path_exp2}
\end{figure*}

\textbf{More histopathology visualizations.}
Figure~\ref{fig:path_exp2} presents additional learned patch-selection maps
under the mDG setting. In the displayed examples, PatchGen assigns higher
weights to regions that are morphologically consistent with diagnostic
tissue patterns, whereas the compared baselines more frequently emphasize
surrounding contextual regions. These visualizations provide qualitative
evidence about the learned selector but do not constitute clinical or causal
validation.
{\blue{The displayed pathology examples were reviewed by a pathology expert for morphological plausibility. This review supports the qualitative description of the highlighted regions, but the heatmaps are still model-selection maps rather than pixel-level clinical annotations or a diagnostic gold standard.}}

\end{document}